\newtheorem{theorem}{Theorem}
\newtheorem{corollary}{Corollary}
\newtheorem{lemma}{Lemma}
\newtheorem{assumption*}{Assumption}
\newtheorem{stdassumption*}{Standing Assumption}
\newtheorem{stdassumption}{Standing Assumption}
\newtheorem{prop}{Proposition}
\newtheorem{definition}{Definition}
\newtheorem{definition*}{Definition}
\DeclareMathOperator*{\argmin}{arg\,min}
\DeclareMathOperator*{\E}{\mathbb{E}}
\DeclareMathOperator*{\R}{\mathbb{R}}
\DeclareMathOperator*{\DP}{\mathrm{D_{\Psi}}\vert}
\DeclareMathOperator*{\sip}{\mathrm{\sigma_{\Psi}}}
\def\BibTeX{{\rm B\kern-.05em{\sc i\kern-.025em b}\kern-.08em
    T\kern-.1667em\lower.7ex\hbox{E}\kern-.125emX}}
\begin{document}

\title{\bf \LARGE 
Is Stochastic Mirror Descent Vulnerable to Adversarial Delay Attacks? 
A Traffic Assignment Resilience Study
}

\author{Yunian Pan, Tao Li, and Quanyan Zhu$^*$% <-this % stops a space
\thanks{$^*$The authors are with the Department of Electrical and Computer Engineering, Tandon School of Engineering, New York University, Brooklyn, NY, 11201 USA; E-mail: {\tt\small \{yp1170,tl2636,qz494\}@nyu.edu}}%
}

\maketitle

\begin{abstract}
 \textit{Intelligent Navigation Systems} (INS) are exposed to an increasing number of informational attack vectors, which often intercept through the communication channels between the INS and the transportation network during the data collecting process. 
To measure the resilience of INS, we use the concept of a Wardrop Non-Equilibrium Solution (WANES), which is characterized by the probabilistic outcome of learning within a bounded number of interactions. 
By using concentration arguments, we have discovered that any bounded feedback delaying attack only degrades the systematic performance up to order $\tilde{\mathcal{O}}(\sqrt{{d^3}{T^{-1}}})$ along the traffic flow trajectory within the Delayed Mirror Descent (DMD) online-learning framework. This degradation in performance can occur with only mild assumptions imposed. 
Our result implies that learning-based INS infrastructures can achieve Wardrop Non-equilibrium even when experiencing a certain period of disruption in the information structure. These findings provide valuable insights for designing defense mechanisms against possible jamming attacks across different layers of the transportation ecosystem.
\end{abstract}

% \begin{IEEEkeywords}
%  Traffic Assignment, Stochastic Mirror Descent, Adversarial Delay Attacks, Resilience, Wardrop Equilibrium, Reinforcement Learning
% \end{IEEEkeywords}

\section{Introduction}

% \textcolor{blue}{
% Here goes the introduction of ubiquitous security problems faced by modern INS.....
% \\~\\
% One of the fundamental challenge is delays
% Here go the existing solutions.....
% \\~\\
% Here raises the question of whether Mirror Descent is resilient facing such security issues.....
% \\~\\
% Here are the unspoiled results....
% }

% The past decades have witnessed significant growth in the Internet-based traffic routing demand, motivation
The real-time routing demand has been significantly growing with the rapid development of the modern \textit{Intelligent Navigation Systems} (INS),
in which typical \textit{Online Navigation Platforms} (ONP), such as Google Maps and Waze, receive billions of routing requests per second. 
It is, therefore, crucial to provide reliable and efficient navigation services for active users, such that the ex-post routing regret is small. 
The regret-free routing for individuals gives rise to special macroscopic traffic conditions, commonly known as the \textit{Wardrop equilibrium} (WE) \cite{wardrop1952road} in \textit{congestion games}. The seeking of WE is referred to as \textit{traffic assignment} problem. 
However, the increasing connectivity of transportation networks exposes the INS to a wide variety of {\it informational attacks} \cite{pan2022informational}.

%As transportation networks become increasingly interconnected, the number of attack vectors against the entire transportation system is also on the rise. 

% However, as the digitalization and connectivity of transportation networks are increasing, the INS is facing a growing number of attack vectors against the network infrastructures.  
% Consequently, the quality of service delivered by ONP can be threatened.
% For example, as discussed in \cite{pan2022informational}, the real-time data collected from individual GPS devices and road sensors may be subject to strategic poisoning, which may further result in the unavailability of critical information and cause wide disruptions to the infrastructure, even significant traffic congestion and service breakdown.

%{\bf A PICTURE OF THE PROBLEM AND THE FRAMEWORK WOULD BE USEFUL.}

% We focus on jamming attacks that delays the delivery of traffic latency
This paper focuses on a class of {\it information-delaying attacks} against the INS that aim to intercept the communication channel between the data source/individual users and the navigation center, \textbf{delaying} the delivery of traffic condition information for adversarial purposes.  
A quintessential attack surface is the data transmission process, including the network jamming attacks \cite{xu2017game} which are often implemented by sending high-frequency wireless interference, or a sheer volume of network packets to the target communication channels/servers.
With the {\it information delays} of critical traffic conditions, the INS infrastructures are at risk of making improper routing recommendations and misguiding users.

% Why mirror descent and Wardrop Noneq 
Prior studies have shown that relying solely on attack detection and prevention measures is inadequate in the face of pervasive malicious factors \cite{zhu2020cross,ishii2022security}. These findings highlight the need to develop resilient mechanisms to endow traffic systems with self-healing capabilities.
%Existing works have demonstrated that attack detection and prevention alone is sometimes challenging and insufficient facing ubiquitous malicious factors \cite{zhu2020cross,ishii2022security}, urging us to develop resilient mechanisms to equip the traffic system with self-healing properties.
% It is, therefore, indispensable to develop resilient mechanisms to equip the traffic system with self-healing properties facing such information-delaying attacks.
We adopt the notion of \textit{Wardrop Non-Equilibrium Solution} (WANES) \cite{resiliencepaper}, which extends the regret analysis in games \cite{pan2021efficient,tao22confluence} to probabilistic setting, to investigate the provable resilience of \textit{Mirror Descent} (MD) based INS under adversarial delays.
The schematic view of this framework is illustrated in \cref{schematic}.

\begin{figure}
    \centering
    \includegraphics[width=.48\textwidth]{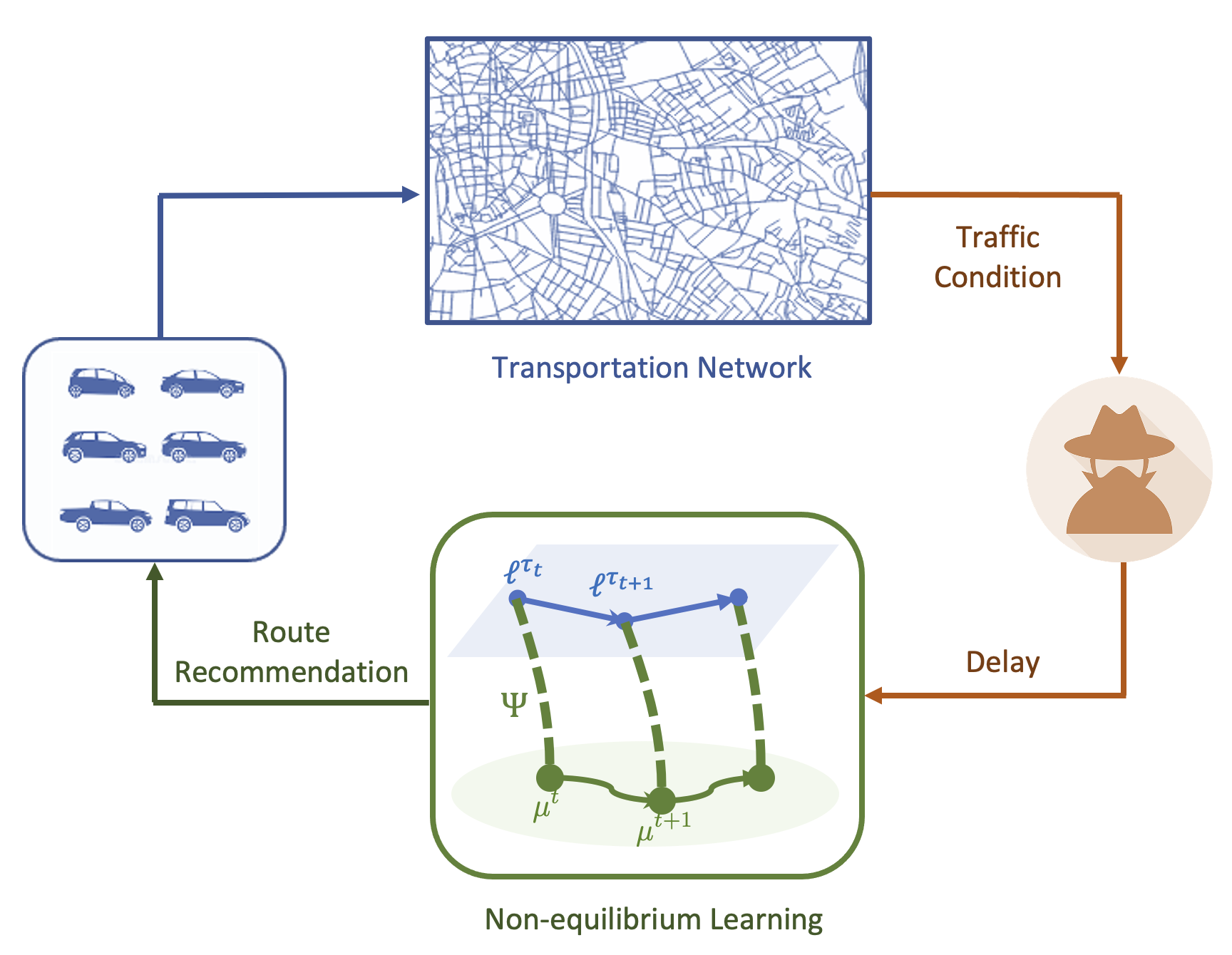}
    \caption{The traffic conditions are withheld at each timestep due to the presence of the attacker, creating a timing disparity between the traffic flow and traffic latency, disrupting the route recommendation. }
    \label{schematic}
\end{figure}

% What challenges we have and what we achieve 
To the best of our knowledge, this  work is among the first endeavors to analyze stochastic mirror descent under adversarial delays, given existing works often focus on the deterministic or bandit setting.
 Our initial step involves analyzing the \textit{Delayed Mirror Descent} (DMD) dynamics. DMD is nearly identical to ordinary MD, but with the added feature of delayed latency feedback. Our analysis rests on the telescoping technique and a Martingale approach, which encounters two significant challenges. The first challenge is establishing the per-iterate telescoping inequality due to the potentially large cardinality (up to $T$) of the delayed latency ``bundle" in the general setting.
 The second challenge is deriving the concentration argument due to the need for special analysis of the maximum of the empirical process. 
We tackle the problems by making additional assumptions about the attack capacity and the uniform bound of the expected latency function.
These assumptions are often practical in traffic assignment problems. 
%These assumptions induce alternative observations that help us surpass the challenges.

Without further assumptions imposed, we show that the INS under DMD dynamics with carefully chosen learning parameters endure performance loss up to order $\mathcal{O}(\sqrt{d^3T^{-1}})$ with high probability, which matches the order of delay-free case where $d = 1$. 
When $d = \mathcal{O}(T^{\alpha})$ with $\alpha < 1/3$, the performance loss order indicates a high-probability sub-linear regret bound.
This result can be generalized to resilience analysis of online traffic assignments when the Beckman potential is time-varying.
It further indicates the provability of a certain class of online learning problems with delayed stochastic feedback.

\section{Related Work} \label{rw}

The role of mirror descent in the congestion game frameworks has been discussed in the literature \cite{krichene2014convergence,krichene2015convergence}, where the Hannan consistency in the Ces\'aro sense was established in the deterministic latency setting. 
It was later demonstrated in \cite{Vu2021} that the convergence rate, although remains $\mathcal{O}(T^{-1/2})$ in the stochastic domain, can be lifted to $\mathcal{O}(T^{-2})$ leveraging Nesterov's acceleration scheme in the deterministic domain. 
Our result matches the $T^{-1/2}$ bound in the trivial setting without feedback delays (where $d=1$). 

% Traffic assignment security related to information disruption....
The gap between online learning and resilience in congestion games was filled in \cite{resiliencepaper}, where the setting was extended by considering the adversary and studying the probabilistic non-equilibrium outcome of learning. 
The adversary is assumed capable of informational manipulation, involving threats both on the physical layers (such as sensors, GPS spoofing \cite{lou2016decentralization},) and cyber layers (such as routing attacks and DoS attacks \cite{mecheva2020cybersecurity}.) 
Our study concerns a general class of attacks that cause delayed feedback to the traffic assignment systems.

% online learning under adversarial delay literature 
There have been versatile delay-handling strategies in the literature. 
One is pooling multiple independent learners together to process the buffered feedback vectors in a sequential manner, see the analysis of Joulani et al. in \cite{joulani2013online}. Another is treating delays as introduced by distributed asynchronous processors, which is explored by Agarwal et al. \cite{NIPS2011_f0e52b27}. 
Both strategies mentioned above can be resource intense in practice. 
Therefore, this work builds upon simplistic models \cite{NIPS2015_advdelay}, and explores the stochastic feedback setting, due to its absence in current literature.

\section{Problem Formulation}\label{pf}

\noindent \textbf{Repeated Congestion Game} A transportation network is abstracted by a directed, finite, and connected graph $\mathcal{G} = (\mathcal{V}, \mathcal{E})$, with nodes $\mathcal{V}$ depicting road junctions, transportation hubs, etc., and edges $\mathcal{E}$ indexing road segments, transportation segments, etc., between different node pairs, we assume that $(v, v) \notin \mathcal{E}$, for all $v \in \mathcal{V}$.
The set of Origin-Destination (OD) pairs is $\mathcal{W} \subseteq \mathcal{V} \times \mathcal{V}$. 
Between each OD pair $w \in \mathcal{W}$  is a set of directed paths $\mathcal{P}_w$, let $\mathcal{P}: = \bigcup_{w\in\mathcal{W}}\mathcal{P}_w$. 

The vehicles over $\mathcal{G}$ constitute a set of infinitesimal players $\mathcal{X}$, split into distinct populations indexed by different OD pairs, i.e., $\mathcal{X} = \bigcup_{w\in\mathcal{W}} \mathcal{X}_w$ and $\mathcal{X}_w \bigcap \mathcal{X}_{w^{\prime}} = \emptyset,\ \forall w, w^{\prime} \in \mathcal{W}$. 
For each $w \in \mathcal{W}$, let $m_w$ be the traffic demand, i.e., the number of vehicles traveling between $w$. 
%\footnote{Let $\bar{M} = \sum_{w\in\mathcal{W}}m_w$, $W:=|\mathcal{W}|$} 

For all $w \in \mathcal{W}$, each player $x \in \mathcal{X}_w$ is equipped with an action set $\mathcal{P}_w$ and makes decisions repeatedly, at each round $t=1, \ldots, T \in \mathbb{N}_+$, each player is committed to a single path $p \in \mathcal{P}$. 
The action profile can be captured a flow vector $\mu\in \Delta:=\{\mu\in \R^{|\mathcal{P}|}_{\geq 0} | \sum_{p \in \mathcal{P}_w} \mu_p = m_w \ \forall w \in \mathcal{W}\}$. A path flow vector determines an edge flow vector $q \in \R^{|\mathcal{E}|}_{\geq 0 }$, through the edge-path incident matrix $\Lambda = [\Lambda^1 \vert , \ldots, \vert \Lambda^{|\mathcal{W}|}] \in \R^{|\mathcal{E}| \times |\mathcal{P}|}$ such that $\Lambda^w_{e, p} = \mathds{1}_{\{e\in p\}}, \forall e\in \mathcal{E}, w \in \mathcal{W}, p \in \mathcal{P}_w$. In a compact form, $q = \Lambda \mu$.

The total travel time of a road segment is jointly determined by the traffic flow on that road and some stochastic factors, such as weather condition and road incidents, which affects the congestion level. 
Let $(\Omega, \mathcal{F}, \mathbb{P})$ be the underlying probability space; $\omega \in \Omega$ encapsulates the universal latent conditions for $\mathcal{G}$.
Let $l_e: \R_{\geq 0} \times \Omega \mapsto \R_+$ be the edge latency function for $e \in \mathcal{E}$, $l: \R^{|\mathcal{E}|}_{\geq 0} \times \Omega \mapsto \R^{|\mathcal{E}|}_+$ be its vector-valued extension, $\ell:  \Delta \times \Omega \mapsto \R^{|\mathcal{P}|}_+$ be the path latency function. Fixing $\omega \in \Omega$, one can verify that $\ell = \Lambda^{\top} l (\Lambda \mu, \omega)$.

%Each instance $\omega$ determines a congestion game, captured by the tuple $\mathcal{G}_c^{\omega} = ( \mathcal{G}, \mathcal{W}, \mathcal{X} , \mathcal{P}, \mathcal{\ell}(\cdot, \omega) )$.  
Standing Assumption \ref{standingassumption1}  ensures that the latency functions realistically capture the relation between traffic flow and travel time.

\begin{stdassumption}\label{standingassumption1} 
  The latency functions $l_e$ are $\mathcal{F}$-measurable, differentiable w.r.t. $q_e$, for all $ e \in \mathcal{E}$, and $\cfrac{ \partial l_e (q_e, \omega)}{\partial q_e} > 0 $ for all $q_e \geq 0$.
\end{stdassumption}

Each path flow profile $\mu \in \Delta$ induces a pushforward probability measure $\mathbb{P}_{\ell, \mu} :  \mathcal{B}(\R^{|\mathcal{P}|}_+) \to [0,1]$ associated with the positive random vector $\ell(\mu, \cdot): \Omega \mapsto \R^{|\mathcal{P}|}_+$. 

The Standing Assumption \ref{standingassumption2} quantitatively ensures that the latency for a given path flow is relatively stable in the sense that it has a subgaussian tail. 
In other words, its variance magnitude is controlled by the parameter $\sigma$. 
It also implies that the expected travel time of the paths is bounded by some upper estimate that is linearly related to $\sigma$ by the oracle-given constant $\kappa$. 
 Such a setting is practical in most realistic scenarios. 
\begin{stdassumption}\label{standingassumption2}~
\begin{itemize}
    \item There exists $\sigma > 0$, such that for all $\mu \in \Delta$, the Euclidean norm $\|z\| = \|\ell(\mu) - \E  [\ell (\mu )]\|$ is $\sigma$-subgaussian, i.e., 
    \begin{equation*}
         \E [\exp (\frac{\|z \|^2}{\sigma^2} )] \leq \exp(1).
    \end{equation*}
\item There exists a positive constant $L$, such that, %the BMP is $L$-Lipschitz smooth, i.e., 
\begin{equation*}
    \| \E \ell (\mu)\| \leq L \quad \forall \mu \in \Delta.
\end{equation*} 
 Further, there exists a constant $\kappa > 0$ such that $L \leq \kappa \sigma$.
\end{itemize}

\end{stdassumption}
%%%%%%%%%%%%%%%%%%%%%%%%%%%%%%%%%%%%%%%%%%%%%%%%%%%%%%
%%%%%%%%%%%%%%%%%%%%%%%%%%%%%%%%%%%%%%%%%%%%%%%%%%%%%%
%%%%%%%%%%%%%%%%%%%%%%%%%%%%%%%%%%%%%%%%%%%%%%%%%%%%%%
%%%%%%%%%%%%%%%%%%%%%%%%%%%%%%%%%%%%%%%%%%%%%%%%%%%%%%

%Note that in the case where $\Omega$ is a singleton, i.e., $|\Omega| = 1$, this congestion game falls into the deterministic regime. 

\noindent \textbf{Resilience under Adversarial Information Delay} Wardrop Equilibrium (WE) has been a conventional solution concept in transportation literature that describes the conditions under which the individual users have the least ex-post regret. 
 Since the latent variable is oftentimes unobservable, we consider a meta-version of the congestion game, $\mathcal{G}_c = (\mathcal{G}, \mathcal{W}, \mathcal{X}, \mathcal{P}, \mathbb{E}[\ell(\cdot)])$, with the utility functions replaced by the expected latency function. This ``meta'' game gives rise to a solution concept corresponding to Definition \ref{wedef}. 

\begin{definition}[Mean Wardrop Equilibrium \cite{wardrop1952road}] 
\label{wedef}
 A path flow $\mu \in \Delta$ is said to be a \textit{Mean Wardrop Equilibrium} (MWE) if $\forall w \in \mathcal{W}$, $ \mu_p > 0$ indicates $ \E [\ell_p] \leq \E[\ell_{p^{\prime}}]$ for all $p, p^{\prime} \in \mathcal{P}_w$. 
 The set of all MWE is denoted by $\boldsymbol{\mu}^*$. 
\end{definition}
It is well known that at MWE, the Beckman Potential of $\mathcal{G}_c$ is minimized; by Fubini's theorem, it is equivalent to minimizing the Mean Beckman Potential (MBP), as in \eqref{beckmanopt},
\begin{equation} \label{beckmanopt}
       \min_{\mu \in \Delta} \Phi (\mu) := \E \left[\sum_{e \in \mathcal{E}}\int_{0}^{(\Lambda\mu)_e} l_e(z, \omega)dz \right]. 
\end{equation}
Given Standing Assumption \ref{standingassumption1}, $\Phi$ is in general non-strictly convex, $\nabla_{\mu} \Phi(\mu)=  \E_{\omega} [ \Lambda^{\top} l (\Lambda \mu, \omega)] = \E [\ell (\mu)]$.

By convention, at each time $t$, within each OD population $w \in \mathcal{W}$, if the infinitesimal players randomize independently according to some mixed strategy $\pi^t_w \in \Delta(\mathcal{P}_w)$ identically, the individual-level and population-level decision makings are equivalent \cite{krichene2014convergence}, in the sense that $ \pi^t_w = \frac{1}{ m_w} (\mu^t_p)_{p \in \mathcal{P}_w }$. 
Let $\mathcal{H}_t$ denote the history of information about $\mu^t$ and $\ell^t$ realizations, a learning algorithm $\mathcal{A}$ maps a $\mathcal{H}_t$ to $\mu^{t+1}$.

Given $\mathcal{A}$, an MWE solution is oftentimes infeasible within finite rounds, yet Wardrop Non-Equilibrium Solutions (WANES) \cite{resiliencepaper} provide additional freedom to analyze the transient behavior.

\begin{definition}
  \label{wanes}
For the congestion game $\mathcal{G}_c$, let $(\Delta^T, \mathcal{B}^T)$ be the product space, with $\mathcal{B}^T$ be the product Borel algebra of $\Delta^T$. For any $\epsilon>0$, define the target set as $\mathcal{C}_\epsilon:=\{\mu\in \Delta| \Phi(\mu)-\Phi^*<\epsilon\}$. A probability measure $\mathbb{P}_T$ over $(\Delta^T, \mathcal{B}^T)$ is an $(\epsilon,\delta)$-Wardrop Non-Equilibrium solution (WANES) if  $\mathbb{P}_T\{(\mu^t)_{t=1}^T\in \Delta^T| \bar{\mu}^T\in \mathcal{C}_{\epsilon}\}\geq 1-\delta$, with $\bar{\mu}^T = \frac{1}{T}\sum_{t=1}^T\mu^t$.
Furthermore, any learning algorithm $\mathcal{A}$ producing such $\mathbb{P}_T$ is said to be $(\epsilon,\delta)$-resilient. 
\end{definition}

\noindent \textbf{Attack Model} 
We consider the online traffic navigation scenario, where a feedback-delaying attacker (typically network jamming attacker) can delay the crowdsourcing of traffic latency (typically by launching DoS attacks), withholding the traffic latency of each round for a finite period of time. 
Consequently, the ONP navigation center is not able to retrieve the exact travel time estimates from the planned paths at each round $t$. 
Instead, at every $t$, a historical ``bundle'' of latency feedback is revealed to the navigation center, while the real-time traffic latency is to be revealed in the future.

Mathematically, let the attacker's action be a vector $\mathbf{d} =  (\min \{ d_t, T - t +1 \})_{1 \leq t \leq T}$, where $d_t \in \mathbb{N}_+$ represents the delayed time that the latency information $\ell^t$ is delivered to the INS. 
The total budget delay is $D = \sum_{t=1}^T\min \{d_t , T - t + 1  \} $, which is of order $\mathcal{O}( T^2 )$, as the maximum $D$ is $T(T+1)/2$.  
We assume that the attacker's maximum per-iterate attack budget is $d:=  \| \mathbf{d} \|_{\infty}  \ll T$. 
% Specifically, we assume there exist $c_d > 0$, and $\alpha \in (0,1]$ such that $1 \leq d \leq c_d T^{\alpha} = \mathcal{O}(T^{\alpha})$, to constrain the attack capacity.
The INS receiver's information structure \cite{tao_info} at time $t$ includes the latency vector ``bundle'' indexed by $\mathcal{D}_t = \{ k \vert k + \min\{d_k , T - k + 1 \}-1 = t \}$, i.e., they receive $\mathcal{L}^t := \{ \ell^k |  k \in \mathcal{D}_t\}$, without access to the ``time stamps'' of $\ell^k$.

%\textcolor{red}{Do they know the ``time stamp'' $k$ of the latency bundle? i.e., do they distinguish which gradient is from which time step?}

% The protocol is as shown in \ref{delayedfeedbackprotocol}.

% \begin{algorithm}
% \label{delayedfeedbackprotocol}
% \SetKwInOut{Input}{Input}
% \SetAlgoLined
% \Input{Initialize $\mu^1(x) \in \Delta$}
% \For{$t = 1, \ldots, T$}{
%      \For{$w \in \mathcal{W}$, $x \in \mathcal{X}_w$, } 
%      { 
%      INS assigns mixed strategy $\pi^t(\cdot, x) \leftarrow \frac{1}{m_w} (\mu^t_p)_{p \in \mathcal{P}_w}$ to player $x$\;
%      player $x$ samples path $A(x) \sim \pi^t(\cdot, x)$;
%      }
%     The population $\mathcal{X}$ suffers latency loss $ \ell^t$\;
%     INS reveals latency vectors \textcolor{red}{$(\ell^k)_{k \in \mathcal{D}_t}$}\;
%     INS update traffic flow:
%      \begin{equation*}
%          \mu^{t+1} (x) \leftarrow \mathcal{A}( \{  (\ell^k)_{k \in \mathcal{D}_t}, \mu^{t}(x)) .
%      \end{equation*}

% }
% \caption{Learning with Delayed Latency}
% \end{algorithm}

\section{Traffic Assignment with Delayed Feedback} \label{algsec}

\noindent\textbf{The Delayed Mirror Descent} Let $\bar{\ell}^t$ be the estimate of the sum of arrived latency ``bundle'', i.e., $\bar{\ell}^t =  \sum_{ \tau \in \mathcal{D}_t} \ell^{\tau}.$
The Delayed Mirror Descent (DMD), as shown in Algorithm \ref{delayedfeedbackalgo}, replaces the latency vector in the ordinary Mirror Descent algorithm with $\bar{\ell}^t$.
We use the Bregman divergence that measures the dissimilarity between two iterates.

\begin{definition}%[Bregman Divergence]
    Given a mirror map $\Psi: \nabla \to \bar{\R}$ and two points $\mu_1, \mu_2 \in \Delta$, the Bregman divergence is $ {\DP}_{\mu_2}^{\mu_1} = \Psi(\mu_1) - \Psi(\mu_2) - \langle \nabla\Psi(\mu_1) , \mu_1 - \mu_2 \rangle$.
\end{definition}
   % \begin{equation}
   %      {\DP}_{\mu_2}^{\mu_1} = \Psi(\mu_1) - \Psi(\mu_2) - \langle \nabla\Psi(\mu_1) , \mu_1 - \mu_2 \rangle.
   %  \end{equation}

Suppose that the mirror map  $\Psi$ is $\sip$-strongly convex, we have $\DP_{\mu_2}^{\mu_1} \geq \frac{\sip}{2}\|\mu_1 - \mu_2 \|^2$.  We refer readers to \cite{BECK2003167} for more in-depth discussion regarding this notion. 

\begin{algorithm}[htbp]
\label{delayedfeedbackalgo}
\SetKwInOut{Input}{Input}
\SetAlgoLined
\Input{initialize $\mu^1 \in \Delta$, learning rate $\eta$.}
\For{$t \in 1, \ldots, T$}{
     \For{$w \in \mathcal{W}$, $x \in \mathcal{X}_w$, } 
     { 
     INS assigns mixed strategy $\pi^t(\cdot, x) \leftarrow \frac{1}{m_w} (\mu^t_p)_{p \in \mathcal{P}_w}$ to player $x$\;
     player $x$ samples path $A(x) \sim \pi^t(\cdot, x)$;
     }
     Players $\mathcal{X}$ suffer latency $ \ell^t \sim \mathbb{P}_{\ell, \mu^t}(\cdot)$\;
     INS reveals latency vector  $(\ell^k)_{k \in \mathcal{D}_t}$ to $\mathcal{X}$\;
     INS updates:
     \begin{equation}\label{mddyna}
         \mu^{t+1}  \leftarrow \argmin_{\mu \in \Delta}  \langle  \mu , \eta \bar{\ell}^t \rangle + D_{\Psi}(\mu, \mu^t)
     \end{equation}
}
\caption{Delayed Mirror Descent \texttt{(DMD)}}
\end{algorithm}

% \begin{assumption}
% There exists a positive constant $L$, such that the BMP is $L$-Lipschitz smooth, i.e., 
%  \begin{equation*}
%   \| \nabla_{\mu} \Phi\| =  \| \E \ell (\mu)\| \leq L \quad \forall \mu \in \Delta.
%  \end{equation*}
%  Further, there exists a positive constant coefficient $\kappa$ such that $L \leq \kappa \sigma$.
% \end{assumption}

\noindent\textbf{Telescoping Setup}  Central to the analysis,  Lemma \ref{telescope} quantifies the summation of BMP functional gains along the learning trajectory, setting up conditions for telescoping the sequence. 

Let $\mathcal{F}_t = \sigma( (z_{\tau})_{\tau \in \mathcal{D}_1}, \ldots,  (z_{\tau})_{\tau \in \mathcal{D}_{t-1}})$ be the filtration of the delayed data generating process for $t =1,\ldots, T$, note that $\mu^t$ is $\mathcal{F}_t$-measurable. We fix an equilibrium flow $\mu^* \in \boldsymbol{\mu}^*$ for simplicity of analysis, and define the process $\xi_t = \eta \langle z^t, \mu^* - \mu^t \rangle$ for $t =1, \ldots, T$. We also let $\|z^{t}_{m}\|:= \max_{r \in \cup_{s =\tau_t}^{t}  \mathcal{D}_s} \|z^{r}\|$ be the maximum of the empirical process generated through $\mathcal{D}_{\tau_t}, \ldots, \mathcal{D}_t$, with $\tau_t = \min \mathcal{D}_t$. 

\begin{lemma} \label{telescope}
Under the mirror descent dynamics with latency feedback, the following holds for $t = 1, \ldots, T$,
\begin{equation} \label{telescopingsetup}
\begin{aligned}
     & \sum_{\tau \in \mathcal{D}_t} \eta\left(\Phi^{\tau}-\Phi^*\right) - \frac{2\eta^2 d}{\sigma_{\Psi}}L^2  + {\DP}^{\mu^{*}}_{\mu^{t+1}}-{\DP}^{\mu^{*}}_{\mu^t}\\ &  \leq  \sum_{\tau \in \mathcal{D}_t}\xi_{\tau} + \frac{2\eta^2d}{\sigma_{\Psi}} \| z^t_m \|^2  .
\end{aligned}
\end{equation}
\end{lemma}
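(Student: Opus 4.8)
The plan is to convert the delayed update into a single per-round inequality that can later be summed, by combining three ingredients: convexity of the Mean Beckman Potential $\Phi$, a martingale-style decomposition of each realized latency into its conditional mean and a centered noise $z^{\tau}$, and the first-order optimality of the mirror step \eqref{mddyna}. The output should be an inequality in which the Bregman divergences telescope, the martingale part is isolated as $\sum_{\tau\in\mathcal{D}_t}\xi_\tau$, and everything else is controlled by the per-iterate attack budget $d$ and the uniform norm estimates.

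First I would bound the potential gaps. Since $\Phi$ is convex with $\nabla\Phi(\mu)=\E[\ell(\mu)]$, for each $\tau\in\mathcal{D}_t$ convexity gives $\Phi^{\tau}-\Phi^{*}\le\langle\E[\ell(\mu^{\tau})],\mu^{\tau}-\mu^{*}\rangle$. Writing $\E[\ell(\mu^{\tau})]=\ell^{\tau}-z^{\tau}$ and recalling $\xi_{\tau}=\eta\langle z^{\tau},\mu^{*}-\mu^{\tau}\rangle$, summation over the bundle yields
\[
\sum_{\tau\in\mathcal{D}_t}\eta\left(\Phi^{\tau}-\Phi^{*}\right)\le\eta\sum_{\tau\in\mathcal{D}_t}\langle\ell^{\tau},\mu^{\tau}-\mu^{*}\rangle+\sum_{\tau\in\mathcal{D}_t}\xi_{\tau},
\]
which already separates the stochastic term destined for later concentration from a purely deterministic inner-product sum.

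Next I would extract the Bregman differences. The first-order optimality condition for \eqref{mddyna}, combined with the three-point identity for $D_{\Psi}$, the strong-convexity residual $D_{\Psi}(\mu^{t+1},\mu^t)$, and $\bar{\ell}^t=\sum_{\tau\in\mathcal{D}_t}\ell^{\tau}$, gives
\[
{\DP}^{\mu^{*}}_{\mu^{t+1}}-{\DP}^{\mu^{*}}_{\mu^t}\le\eta\langle\bar{\ell}^t,\mu^{*}-\mu^{t+1}\rangle-D_{\Psi}(\mu^{t+1},\mu^t).
\]
Adding the two displays, the comparator $\mu^{*}$ drops out and the deterministic inner products coalesce, because $\eta\sum_{\tau}\langle\ell^{\tau},\mu^{\tau}-\mu^{*}\rangle+\eta\langle\bar{\ell}^t,\mu^{*}-\mu^{t+1}\rangle=\eta\sum_{\tau\in\mathcal{D}_t}\langle\ell^{\tau},\mu^{\tau}-\mu^{t+1}\rangle$. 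The lemma thus reduces to the single ``delay-error'' estimate $\eta\sum_{\tau\in\mathcal{D}_t}\langle\ell^{\tau},\mu^{\tau}-\mu^{t+1}\rangle-D_{\Psi}(\mu^{t+1},\mu^t)\le\frac{2\eta^2 d}{\sigma_{\Psi}}\bigl(L^2+\|z^{t}_{m}\|^2\bigr)$.

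Bounding this delay-error term is the main obstacle, and it is precisely the ``large-bundle'' difficulty flagged in the introduction. The first lever is the attack-capacity assumption: $k\in\mathcal{D}_t$ forces $k=t-\min\{d_k,T-k+1\}+1\ge t-d+1$, so $\mathcal{D}_t\subseteq\{t-d+1,\dots,t\}$ and $|\mathcal{D}_t|\le d$, which caps the bundle by the budget $d$ rather than by $T$. For each summand I would use Cauchy--Schwarz then Young's inequality, splitting against the strong-convexity lower bound $D_{\Psi}(\mu^{t+1},\mu^t)\ge\frac{\sigma_{\Psi}}{2}\|\mu^{t+1}-\mu^t\|^2$, and control each latency norm uniformly by $\|\ell^{\tau}\|\le\|\E[\ell(\mu^{\tau})]\|+\|z^{\tau}\|\le L+\|z^{t}_{m}\|$ via Standing Assumption~\ref{standingassumption2} and the definition of the maximal process. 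The delicate point is the iterate drift $\mu^{\tau}-\mu^{t+1}$: telescoping it over the arrival window and invoking the one-step estimate $\|\mu^{s+1}-\mu^s\|\le\frac{\eta}{\sigma_{\Psi}}\|\bar{\ell}^s\|$ (itself a consequence of the optimality condition and $\sigma_{\Psi}$-strong convexity) is exactly what forces the maximum in $\|z^{t}_{m}\|$ to range over the whole window $\cup_{s=\tau_t}^{t}\mathcal{D}_s$ rather than over $\mathcal{D}_t$ alone. The hard part throughout is arranging these estimates so that the per-iterate bound depends on the budget only through the cardinality $|\mathcal{D}_t|\le d$ and the uniform norm control, preventing any factor of $T$ from leaking in and keeping $L^2$ and $\|z^{t}_{m}\|^2$ separated so that the deterministic piece can be absorbed on the left while the stochastic piece is concentrated downstream.
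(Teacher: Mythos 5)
Your reduction is algebraically sound, and it takes a genuinely different route from the paper. You keep the bundle update as a single prox step with gradient $\bar{\ell}^t$, apply first-order optimality and the three-point identity once, and cancel the comparator $\mu^*$, reducing Lemma~\ref{telescope} to the single delay-error estimate $\eta\sum_{\tau\in\mathcal{D}_t}\langle\ell^{\tau},\mu^{\tau}-\mu^{t+1}\rangle - D_{\Psi}(\mu^{t+1},\mu^t)\le \frac{2\eta^2 d}{\sigma_{\Psi}}\left(L^2+\|z^t_m\|^2\right)$. The paper instead decomposes the bundle update into sub-steps applying one $\ell^{\tau}$ at a time, introduces intermediate iterates $\mu^{\tau,-},\mu^{\tau,+}$, and telescopes Bregman divergences across these sub-steps via the Pythagorean identity, leaving a per-sub-step error $\frac{\eta^2}{2\sigma_{\Psi}}\|\ell^{\tau}\|^2$ plus a drift term.

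The gap is quantitative but real: your sketched estimates cannot deliver the constant $2d$; they deliver $\mathcal{O}(d^2)$. Split $\mu^{\tau}-\mu^{t+1}=(\mu^{\tau}-\mu^t)+(\mu^t-\mu^{t+1})$. The second piece forces you to play $\eta\langle\bar{\ell}^t,\mu^t-\mu^{t+1}\rangle$ against $D_{\Psi}(\mu^{t+1},\mu^t)\ge\frac{\sigma_{\Psi}}{2}\|\mu^{t+1}-\mu^t\|^2$, and Young's inequality then leaves $\frac{\eta^2}{2\sigma_{\Psi}}\|\bar{\ell}^t\|^2$; since $\bar{\ell}^t$ is a sum of up to $d$ latency vectors, this is of size $\frac{\eta^2 d^2}{2\sigma_{\Psi}}(L+\|z^t_m\|)^2$, and no reweighting of Young's inequality avoids it, because the single step genuinely moves the iterate by $\Theta(\eta\|\bar{\ell}^t\|/\sigma_{\Psi})=\Theta(d\eta)$ and is paired with a gradient of norm $\Theta(d)$. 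This is precisely what the paper's sub-step decomposition buys: each sub-step pairs one $\ell^{\tau}$ with a move of size $\mathcal{O}(\eta\|\ell^{\tau}\|/\sigma_{\Psi})$, so the prox error accumulates as $\sum_{\tau\in\mathcal{D}_t}\|\ell^{\tau}\|^2=\mathcal{O}(d)$ rather than $\|\sum_{\tau\in\mathcal{D}_t}\ell^{\tau}\|^2=\mathcal{O}(d^2)$. Your drift piece $\sum_{\tau\in\mathcal{D}_t}\langle\ell^{\tau},\mu^{\tau}-\mu^t\rangle$ is also $\mathcal{O}(d^2)$ (up to $d$ summands, each carrying a window drift of size $\mathcal{O}(d\eta)$); in fairness, the corresponding terms I--III in the paper's proof each carry $\sum_{\tau\in\mathcal{D}_t}d=|\mathcal{D}_t|\,d$, which the final combining display silently replaces by $d$, so strictly read, the paper's argument also only establishes an $\mathcal{O}(d^2)$ per-iterate error. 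The practical upshot: either adopt the sub-step decomposition (at least for the prox-error part), or accept a bound of order $d^2\eta^2\sigma_{\Psi}^{-1}(L^2+\|z^t_m\|^2)$ --- which would in fact leave the downstream $d^3$ rate of Theorem~\ref{inductthm} and Corollary~\ref{mainresult} intact, since the term $V_{2,t}$ there is already $\Theta(d^2\eta^2)\|z^t_m\|^2$ before the extra factor $d$ from the maxima --- but be aware that, as sketched, your final step does not prove the inequality with the stated constant, and the closing sentence of your proposal (arranging estimates so that only a single factor of $d$ appears) is exactly the step that fails.
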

 As illustrated in Fig. \ref{cardinbounded}, due to bounded attack capacity $d$, the cardinality of the delayed latency ``bundle'' $|\mathcal{D}_t| \leq d$ for every $t \leq 1, \ldots, T$. One can further see that the cardinality of $ \cup_{s =\tau_t}^t \mathcal{D}_t $ should be bounded by $2d$. 

\begin{figure}[htbp]
    \centering
    \includegraphics[width = .35\textwidth]{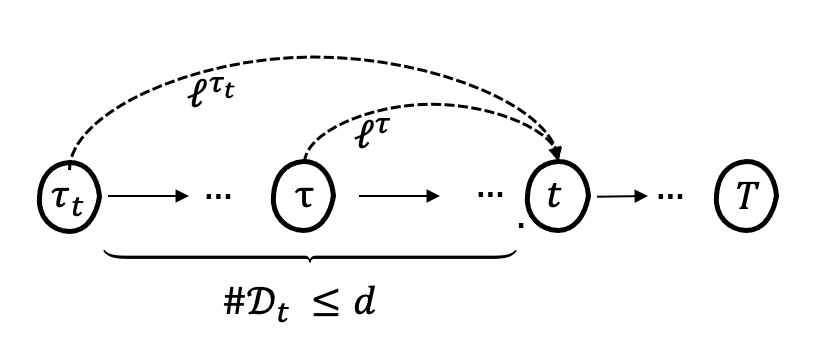}
    \caption{The per-iterate latency ``bundle" is bounded, $t - \tau_t + 1\leq d$.}
    \label{cardinbounded}
\end{figure}

\begin{proof} \label{telescopingproof}
Recall that the mirror step can be equivalently written as 
\begin{equation*}
     \mu^{t+1} = :\nabla  \Psi^* (  \underbrace{\nabla \Psi(\mu^t) - \sum_{\tau \in \mathcal{D}_t}\eta \ell^{\tau}}_{\text{dual step }} ),
\end{equation*}
where $\Psi^*: \R^{|\mathcal{P}|} \to \R $ is the Fenchel conjugate of the mirror map $\Psi$. 
Under proper conditions, for $\mu \in \partial \Psi^*(\nu) $, $\nu \in \partial \Psi(\mu)$, it holds that $  \Psi(\mu) + \Psi^*( \nu) = \langle \mu,   \nu \rangle $. Therefore, given a primal flow point $\mu^t$, we define $ \nu^{t+1} := \nabla \Psi(\mu^t) - \sum_{\tau \in \mathcal{D}_t}\eta \ell^{\tau} = \nabla \Psi(\mu^{t+1})$ as the dual latency point. One can verify that $\mu^{t+1} = \nabla \Psi^*(\nu^{t+1})$ and the dual step can be written as $\nu^{t+1} = \nu^t - \sum_{\tau \in \mathcal{D}_t} \eta \ell^{\tau}$. 

Due to the convexity of $\Phi$,
\begin{equation*}
\begin{aligned}
   &  \quad \sum_{\tau \in \mathcal{D}_t}  \eta ( \Phi(\mu^{\tau}) - \Phi^*)  \leq   \sum_{\tau \in \mathcal{D}_t } \eta \langle \E \ell^{\tau}, \mu^{\tau} - \mu^* \rangle \\
   & =   \sum_{\tau \in \mathcal{D}_t } \eta \langle z^{\tau}, \mu^* - \mu^{\tau} \rangle + \eta \langle \ell^{\tau}, \mu^{\tau} - \mu^* \rangle \\
     &  =  \sum_{\tau \in \mathcal{D}_t }  \xi_{\tau} +  \sum_{\tau \in \mathcal{D}_t}\eta\langle\ell^{\tau}  , \mu^{\tau,-} - \mu^* \rangle   + \eta\langle  \ell^{\tau} , \mu^{\tau}- \mu^{\tau,-} \rangle , 
\end{aligned}
\end{equation*}
where we let $\mu^{\tau,-}$ be intermediate primal flow before applying latency $ \ell^{\tau}$, i.e., let $\mathcal{D}_{t, \tau} := \{ r \in \mathcal{D}_{t}, r < \tau\}$, $\mu^{\tau,-} := \nabla \Psi^*( \nu^{\tau,-} ) = \nabla \Psi^*( \nabla \Psi(\mu^{t}) - \sum_{ r \in \mathcal{D}_{t, \tau}}\eta \ell^r)$. Similarly we define the immediate primal flow point after applying latency $\ell^{\tau}$ as $\mu^{\tau,+} :=  \nabla \Psi^*( \nu^{\tau,+}) :=  \nabla \Psi^*( \nabla \Psi(\mu^{\tau,-}) - \eta \ell^{\tau})$.
 One can then verify that $\mu^{\tau_t, -} = \mu^t $ and $\mu^{\max \mathcal{D}_t, +} = \mu^{t+1}$.

Consider the following decomposition for arbitrary $\mu^* \in \Delta$, $ \langle\mu^{\tau,-} - \mu^*, \ell^{\tau} \rangle = \langle\mu^{\tau,-} - \mu^{\tau,+}, \ell^{\tau} \rangle + \langle\mu^{\tau,+} - \mu^*, \ell^{\tau} \rangle$. 
By the first-order optimality condition of \eqref{mddyna}, $$\langle   \eta \ell^{\tau}  + \nabla \Psi (\mu^{\tau,+})-\nabla \Psi(\mu^{\tau,-}) ,  \mu^{\tau,+} - \mu^* \rangle \leq 0.$$ Apply Pythagorean identity of Bregman divergence: ${\DP}^{\mu^{\tau,+}}_{\mu^{\tau,-}}+{\DP}^{\mu^{*}}_{ \mu^{\tau,+}} - {\DP}^{\mu^{*}}_{\mu^{\tau,-}}=\langle\nabla \Psi(\mu^{\tau,+})-\nabla \Psi(\mu^{\tau,-}), \mu^{\tau,+}-\mu^{*} \rangle$, and we arrive at 
\begin{equation*}
    \eta\langle  \ell^{\tau}  ,  \mu^{\tau,+} - \mu^* \rangle  + {\DP}^{\mu^{\tau,+}}_{\mu^{\tau,-}}+{\DP}^{\mu^{*}}_{\mu^{\tau,+}}-{\DP}^{\mu^{*}}_{ \mu^{\tau,-}} \leq 0.
\end{equation*}
Summing over $\tau \in \mathcal{D}_t$, since ${\DP}^{\mu^{\tau,+}}_{\mu^{\tau,-}} \geq \frac{\sigma_{\Psi}}{2}\| \mu^{\tau,+} - \mu^{\tau,-}\|^2$, by Cauchy-Schwarz inequality and $-b^2 + 2ab \leq a^2$, 
\begin{equation*}
    \begin{aligned}
       & \quad  \sum_{\tau \in \mathcal{D}_t}\eta\langle  \ell^{\tau}   ,  \mu^{\tau,-} - \mu^* \rangle + \frac{\sigma_{\Psi}}{2}\| \mu^{\tau,+} - \mu^{\tau,-}\|^2  \\ & \leq  \sum_{\tau \in \mathcal{D}_t}{\DP}^{\mu^{*}}_{\mu^{\tau,-}}-{\DP}^{\mu^{*}}_{\mu^{\tau,+}}  + \langle\mu^{\tau,-} - \mu^{\tau,+}, \eta \ell^{\tau} \rangle 
       \\ & \leq  \sum_{\tau \in \mathcal{D}_t}{\DP}^{\mu^{*}}_{\mu^{\tau,-}}- {\DP}^{\mu^{*}}_{\mu^{\tau,+}}.  + \eta\| \ell^{\tau} \| \|\mu^{\tau,-} - \mu^{\tau,+} \|, 
    \end{aligned}
\end{equation*}
which then gives
       \begin{equation*}
           \begin{aligned} & \quad  \sum_{\tau \in \mathcal{D}_t}\eta\langle  \ell^{\tau}   ,  \mu^{\tau,-} - \mu^* \rangle \\
         & \leq \sum_{\tau \in \mathcal{D}_t}{\DP}^{\mu^{*}}_{\mu^{\tau,-}}-{\DP}^{\mu^{*}}_{\mu^{\tau,+}} + \frac{\eta^2}{ 2 \sigma_{\Psi}} \|  \ell^{\tau}\|^2 
         \\ & \leq \sum_{\tau \in \mathcal{D}_t}{\DP}^{\mu^{*}}_{\mu^{\tau,-}}-{\DP}^{\mu^{*}}_{\mu^{\tau,+}} + \frac{\eta^2}{ 2 \sigma_{\Psi}} (L^2 + \| z^{\tau}\|^2).
    \end{aligned}
\end{equation*}

Now we analyze $\eta\langle  \ell^{\tau} , \mu^{\tau}- \mu^{\tau,-} \rangle$: applying Cauchy-Schwarz inequality and  triangular inequality several times,
\begin{equation*}
    \begin{aligned}
         & \quad \sum_{\tau \in \mathcal{D}_t } \eta\langle  \ell^{\tau} , \mu^{\tau}- \mu^{\tau,-} \rangle  \leq \sum_{\tau \in \mathcal{D}_t } \eta \|\ell^{\tau} \| \|  \mu^{\tau} - \mu^{\tau,-}\| \\ & \leq \sum_{\tau \in \mathcal{D}_t } \eta \|\ell^{\tau} \|   \left(\sum_{s = \tau}^{t-1} \| \mu^s - \mu^{s+1}  \| + \| \mu^t - \mu^{\tau,-} \| \right) 
         \\ 
         & =  \sum_{\tau \in \mathcal{D}_t } \eta  \| \ell^{\tau}\| 
          \bigg(  \sum_{s=r}^{t-1} \left\| \nabla \Psi^*(\nu^s)  - \nabla \Psi^*(\nu^{s+1}) \right\| \\ 
          & \quad \quad  \quad \quad \quad \quad \quad + \left\|  \nabla \Psi^*(\nu^t)  - \nabla \Psi^*(\nu^{\tau,-}) \right\| \bigg) .
    \end{aligned}
\end{equation*}
Since $\Psi$ is $\sip$-strongly convex, its Fenchel conjugate $\Psi^*$ is $\frac{1}{\sip}$-smooth, the R.H.S. becomes
\begin{align*} 
       & \leq  \sum_{\tau \in \mathcal{D}_t } \eta  \| \ell^{\tau}\|
          \left(  \frac{1}{\sigma_{\Psi}} ( \sum_{s = \tau}^{t-1} \left\| \nu^s - \nu^{s+1} \right\| 
 + \left\| \nu^t - \nu^{\tau,-}\right\|) \right) 
 % \\   & \leq  \sum_{\tau \in \mathcal{D}_t } \frac{\eta^2}{\sigma_{\Psi}}  \| \ell^{\tau}\|
 %          \left(   ( \sum_{s = \tau}^{t-1} \left\| \sum_{r \in \mathcal{D}_s}\ell^r \right\|
 % + \left\| \sum_{ p \in \mathcal{D}_{t, \tau}}  \ell^p \right\|) \right) 
 \\  & \leq   \sum_{\tau \in \mathcal{D}_t }  \frac{\eta^2}{\sigma_{\Psi}} \| \ell^{\tau}\| \left(  \sum_{s = \tau}^{t-1}\sum_{r \in \mathcal{D}_s}\left\| \ell^r \right\| +  \sum_{ p \in \mathcal{D}_{t, \tau}}  \left\| \ell^p \right\| \right) .
  % \\ & \leq     \frac{\eta^2}{\sigma_{\Psi}} \sum_{\tau \in \mathcal{D}_t }\| \ell^{\tau}\| \left(  \sum_{s = \tau}^{t-1}\sum_{r \in \mathcal{D}_s}\left\| \ell^r \right\| +  \sum_{ p \in \mathcal{D}_{t, \tau}} \left\| \ell^p \right\| \right) .
\end{align*}

 To associate the upper estimate with the cardinality of $|\mathcal{D}_t|$,  by triangular inequality, $\| \ell^t \| = \| \E \ell^t + z^t\| \leq L + \|z^t\|$ for all $t$. Breaking the brackets we get, the R.H.S. becomes
 \begin{equation*}
 \begin{aligned}
        &  \leq \underbrace{\frac{\eta^2 L^2 }{ \sigma_{\Psi}} \sum_{\tau \in \mathcal{D}_t }  Q_{\tau}}_{\text{I}}    + \underbrace{\frac{\eta^2 L}{ \sigma_{\Psi}} \sum_{\tau \in \mathcal{D}_t}\| z^{\tau} \| Q_{\tau}}_{\text{II}} \\
         & \quad + \underbrace{\frac{\eta^2 }{ \sigma_{\Psi}} \sum_{\tau \in \mathcal{D}_t}\| z^{\tau}\|  \left(\sum_{s = \tau}^{t-1}\sum_{r \in \mathcal{D}_s} \| z^r\| + \sum_{p \in \mathcal{D}_{t, \tau}} \|z^p\| \right)}_{\text{III}},
 \end{aligned}
 \end{equation*}
where $ Q_{\tau} =  | \mathcal{D}_{t, \tau}| + \sum_{s = \tau}^{t-1} |\mathcal{D}_s|$, which essentially counts for the number of all latency vectors other than $\ell^{\tau}$, that have been delivered between round $\tau_t$ and $t$.

We fix a $\tau $ and a $t$ and look into round $s \in \{\tau, \ldots, t\}$, when $s = t$, consider $q \in \mathcal{D}_{t, \tau}$; when $s< t$, consider $q \in \mathcal{D}_{s}$.
There are two cases, $q < \tau$ and $q \geq \tau$. Consider both cases quantitatively, $Q_{\tau} \leq Q_{\tau,1} + Q_{\tau,2}$:
 \begin{align*}
   %   & \quad  \sum_{ \tau \in \mathcal{D}_t} (|\mathcal{D}_{t ,\tau}|  + \sum_{s=\tau }^{t-1} | \mathcal{D}_s|)  \\
   Q_{\tau} & \leq
    \left(\sum_{q \in \mathcal{D}_{t, \tau}} \mathds{1}_{\{q \geq \tau \}}  + \sum_{s = \tau}^{t-1}\sum_{q \in \mathcal{D}_{s} } \mathds{1}_{\{q \geq \tau \}} \right)
     \\
    & \quad +\left( \sum_{q \in \mathcal{D}_{t, \tau}}\mathds{1}_{\{q < \tau \} }  + \sum_{s = \tau}^{t-1} \sum_{q \in \mathcal{D}_{s} }  \mathds{1}_{\{ q < \tau \}} \right).
 \end{align*}

When $q \geq \tau$, by a pigeonhole argument, there are at most $d_{\tau}$ instances, as fixing a $q$, $ q + d_q - 1$ is at most in only one of $\{ q, \ldots, t-1\}$.
Analytically, we have $Q_{\tau,1} = \sum_{q \in \mathcal{D}_{t, \tau}} \mathds{1}_{\{q \geq \tau\}} + \sum_{s = \tau}^{t-1}\sum_{q \in \mathcal{D}_{s} } \mathds{1}_{\{q \geq \tau \}} 
$, which is just $\sum_{s= \tau}^{t-1} \sum_{q =\tau}^{ s } \mathds{1}_{\{  q + d_q -1 = s\}}$. We rearrange the sum and rewrite it as $\sum_{q = \tau}^{t-1}  \sum_{s = q}^{ t-1} \mathds{1}_{\{ q + d_q - 1 = s\}}$ which by observation is bounded by $  d_{\tau} - 1 \leq d_{\tau} $. 

 When $q < \tau$, fixing $q$, we have that $Q_{\tau,2}$ can be written as $
  \quad  \sum_{q \in \mathcal{D}_{t, \tau}}\mathds{1}_{\{q < \tau \} }  + \sum_{s = \tau}^{t-1} \sum_{q \in \mathcal{D}_{s} }  \mathds{1}_{\{ q < \tau \}} $  which is essentially $\sum_{s = \tau}^t |\mathcal{D}_{s,\tau}| $ and can be further written as $\sum_{ q = 1}^{ \tau -1 } \mathds{1}_{\{ q + d_q -1 = t \}}  + \sum_{s= \tau}^{t-1} \sum_{q =1}^{\tau - 1} \mathds{1}_{ \{ q + d_q - 1 = s \} } $. This quantity essentially counts the $q$'s that get delayed into the range $\{\tau, t\}$, i.e., 
$  \sum_{q = 1}^{\tau -1} \mathds{1}_{\{ q + d_q - 1 \in \{ \tau, \ldots, t\}\}}   \leq  d.$

%Now we discuss the sum here, suppose all the $q \in \{1, \ldots, \tau\}$ were delivered between $\tau$ and $t$, there is no update before $\tau$, $d_1 \geq \tau$, $d_2 \geq \tau - 1$, 

%\textcolor{blue}{I don't get this min operation}
%\textcolor{red}{Since either $\tau + 1 \leq q + d_q \leq \tau + d_{\tau } $ or $ 1 \leq q \leq \tau $ fails will make the indicator function go to $0$, now the question is whether the number of $q+d_q \in \{ \tau+1, \ldots, \tau + d_{\tau}\}$ can be bounded? }

Hence, we arrive at:
\begin{align*}
    \text{I} & \leq \frac{\eta^2_1 L^2 }{ \sigma_{\Psi}}\sum_{\tau \in \mathcal{D}_t }  d_{\tau} + d  \leq \frac{2\eta^2_1 L^2 }{ \sigma_{\Psi}}\sum_{\tau \in \mathcal{D}_t } d,
     \\ \text{II} &\leq  \frac{2\eta^2 L}{ \sigma_{\Psi}} \max_{\tau \in \mathcal{D}_t} \|z^{\tau}\| \sum_{\tau \in \mathcal{D}_t}d,
    \\  \text{III} &\leq \frac{2\eta^2 }{ \sigma_{\Psi}} \max_{\tau \in \cup_{s=\tau_{\min}}^{t}\mathcal{D}_s} \|z^{\tau}\|^2 \sum_{\tau \in \mathcal{D}_t} d.
\end{align*}

Let $\|z^{t}_{m}\|:= \max_{\tau \in \cup_{s =\tau_t}^t \mathcal{D}_s} \|z^{\tau}\|$ be the maximum of the empirical process generated by $\mathcal{D}_{\tau_t}, \ldots, \mathcal{D}_t$, clearly, $\max_{s\in \mathcal{D}_t} \|z^s\| \leq \|z^t_m\|$.
Combining I, II, and III, we obtain
\begin{align*}
    & \quad \sum_{\tau \in \mathcal{D}_t} \eta\left(\Phi^{\tau}-\Phi^*\right)  + {\DP}^{\mu^{*}}_{\mu^{t+1}}-{\DP}^{\mu^{*}}_{\mu^t} \leq  \sum_{\tau \in \mathcal{D}_t } \xi_{\tau} \\ &  + \frac{\eta^2}{2 \sigma_{\Psi}} (L^2 +  \| z^{\tau}\|^2 
     + 2dL^2 + \max_{\tau \in \mathcal{D}_t} 2dL\| z^{\tau}\| + 2d \|z^t_m \|^2)
    \\ & \leq \sum_{\tau \in \mathcal{D}_t } \xi_{\tau} + \frac{\eta^2}{2 \sigma_{\Psi}} ((1 + 2d)(L^2 +\|z^t_m\|^2)  + 2dL \|z^t_m\| )
    % \\ &  \leq  \sum_{\tau \in \mathcal{D}_t } \xi_{\tau} + \frac{\eta^2}{2 \sigma_{\Psi}} ((1 + 2d)L^2 + (1+2d)\| z_m^t\|^2 +  \|z^t_m\|^2  + d^2L^2 )
      \\ & \leq \sum_{\tau \in \mathcal{D}_t } \xi_{\tau} + \frac{2\eta^2d }{\sigma_{\Psi}}( L^2 +\| z_m^t\|^2 ). 
 \end{align*}
Rearrange the terms and we arrive at \eqref{telescopingsetup}.

\end{proof}

\section{Resilience Analysis} \label{ra}
\noindent \textbf{Bounding the Moment Generating Function} To verify that iterates \eqref{mddyna} lead to a Wardrop Non-equilibrium solution, we need to derive a high probability bound for the functional gaps $\Phi(\mu^t) - \Phi^*$ of the flow trajectory $(\mu^t)_{t=1}^{T+1}$ along the learning process. 
To this end, we define a set of weights $\{w_t\}_{t=1}^{T+1}$ that serves as the set of variable coefficients inside the moment generating functions of the functional gaps, which allows for the flexibility of compensating the learning rate $\eta$. 
The target of our analysis is the two auxiliary quantities as we have defined in \eqref{auxqtt}.
\begin{equation}\label{auxqtt}
    \begin{aligned}
   Z_t = & w_{t+1}  \sum_{\tau \in \mathcal{D}_t} (\eta (\Phi^{\tau}-\Phi^*) - \frac{2\eta^2 d L^2}{\sigma_{\Psi}} ) 
   \\ & +  w_{T+1} ({\DP}^{\mu^*}_{\mu^{t+1}} - {\DP}^{\mu^*}_{\mu^t}) \quad \text{for } t = 1, \ldots, T.
   \\ S_t = &\sum_{i=t}^T Z_i , \quad \quad \quad\quad \quad\quad\quad\quad  \text{for } t = 1, \ldots, T+1.
\end{aligned}
\end{equation}

We analyze the moment generating function of $S_t$ conditioned on $\mathcal{F}_{\tau_t }$. By convention, we let $ \tau_t = t$ if $\mathcal{D}_t = \emptyset$, thus $\tau_1 = 1$. 
The core result, as shown in Theorem \ref{inductthm} is a Chernoff-type of bound that gives rise to the main concentration argument of our interest.
\begin{theorem} \label{inductthm} 
Suppose that $\{w_t\}$ satisfies that, $w_{t+1} + w^2_{t+1} \frac{648  d^3 \eta^2 \sigma^2}{\sigma_{\Psi}}  \leq w_{t}$, and $ w_{t+1}\eta^2 d^2 \leq \frac{\sigma_{\Psi}}{432 d  \sigma^2}$. Then, it holds that for every $1 \leq t \leq T$ with probability $1$, 
 \begin{equation}
     \begin{aligned}
  \E [ \exp(S_t) | \mathcal{F}_{\tau_t}] 
    \leq \exp( (w_t - w_{T+1}) {\DP}^{\mu^*}_{\mu^t} + C \sum_{i=t}^T w_{i+1} \eta^2 ), 
 \end{aligned} 
 \end{equation}
where the constant $C := 324 \sigma^2 d^3\sigma_{\Psi}^{-1}(  8 + \kappa^2 ) \big)$.
\end{theorem}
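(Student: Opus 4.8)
The plan is to prove the bound by backward induction on $t$, from the base case $t=T+1$ down to $t=1$, using the recursion $S_t=Z_t+S_{t+1}$ and the tower property of conditional expectation over the filtrations $\{\mathcal{F}_{\tau_t}\}$. The base case is immediate: $S_{T+1}$ is an empty sum, so $\E[\exp(S_{T+1})\mid\mathcal{F}_{\tau_{T+1}}]=1=\exp\big((w_{T+1}-w_{T+1})\DP^{\mu^*}_{\mu^{T+1}}\big)$, which is the claimed bound with a vacuous trailing sum.

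For the inductive step I would first substitute the telescoping inequality \eqref{telescopingsetup} of Lemma \ref{telescope} into the definition \eqref{auxqtt} of $Z_t$. Rearranging shows that $Z_t$ is dominated by $w_{t+1}\sum_{\tau\in\mathcal{D}_t}\xi_\tau+w_{t+1}\frac{2\eta^2 d}{\sigma_{\Psi}}\|z^t_m\|^2+(w_{T+1}-w_{t+1})\big(\DP^{\mu^*}_{\mu^{t+1}}-\DP^{\mu^*}_{\mu^t}\big)$, so that $\exp(S_t)$ factors into a martingale part, a maximal-noise part, and Bregman-divergence parts. Conditioning first on $\mathcal{F}_{\tau_{t+1}}$ lets me invoke the induction hypothesis on $\exp(S_{t+1})$; its divergence term $(w_{t+1}-w_{T+1})\DP^{\mu^*}_{\mu^{t+1}}$ cancels exactly against the $\mu^{t+1}$ contribution carried by $Z_t$. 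I then condition down to $\mathcal{F}_{\tau_t}$ and integrate out the fresh increments $z^\tau$ indexed by the windows $\mathcal{D}_{\tau_t},\ldots,\mathcal{D}_t$.

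Two moment-generating estimates drive the step. First, because $\xi_\tau=\eta\langle z^\tau,\mu^*-\mu^\tau\rangle$ with $z^\tau$ being $\sigma$-subgaussian (Standing Assumption \ref{standingassumption2}) and $\|\mu^*-\mu^\tau\|^2\le\frac{2}{\sigma_{\Psi}}\DP^{\mu^*}_{\mu^\tau}$ by $\sigma_{\Psi}$-strong convexity, each increment is subgaussian and the logarithm of its MGF contributes a quadratic term of order $w^2_{t+1}\eta^2\sigma^2\sigma_{\Psi}^{-1}\DP^{\mu^*}_{\mu^t}$; this is precisely what the first weight condition $w_{t+1}+w^2_{t+1}\frac{648 d^3\eta^2\sigma^2}{\sigma_{\Psi}}\le w_t$ is built to reabsorb, telescoping the surviving $\mu^t$-divergence coefficient from $w_{t+1}-w_{T+1}$ up to $w_t-w_{T+1}$. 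Second — and this is the crux — I must bound $\E[\exp(\lambda\|z^t_m\|^2)]$ with $\lambda=w_{t+1}\frac{2\eta^2 d}{\sigma_{\Psi}}$, where $\|z^t_m\|^2=\max_{r\in\cup_{s=\tau_t}^t\mathcal{D}_s}\|z^r\|^2$ is the maximum of at most $2d$ sub-exponential variables (the cardinality bound $|\cup_{s=\tau_t}^t\mathcal{D}_s|\le 2d$ follows from the bounded-bundle property $t-\tau_t+1\le d$ of Fig. \ref{cardinbounded}). I would control this via a union/cardinality bound on the sub-exponential MGF, valid exactly while $\lambda$ stays under the sub-exponential radius; the second weight condition $w_{t+1}\eta^2 d^2\le\frac{\sigma_{\Psi}}{432 d\sigma^2}$ keeps $\lambda$ below that threshold, making the integral finite. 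Rewriting $L=\kappa\sigma$ via Standing Assumption \ref{standingassumption2} to collect the residual $L^2$ and cross terms $2dL\|z^t_m\|$, the leftover constant accumulates into $C\sum_{i=t}^T w_{i+1}\eta^2$ with $C=324\sigma^2 d^3\sigma_{\Psi}^{-1}(8+\kappa^2)$.

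The hard part will be the maximal-noise estimate together with the measurability bookkeeping it demands. Unlike a plain martingale difference, $\|z^t_m\|^2$ couples noise across the overlapping windows $\mathcal{D}_{\tau_t},\ldots,\mathcal{D}_t$, so I must verify that these increments are conditionally independent of $\mathcal{F}_{\tau_t}$, that the filtrations genuinely nest across the backward recursion, and that no increment is double-counted between consecutive steps. The cardinality bound $|\cup_{s=\tau_t}^t\mathcal{D}_s|\le2d$, the convention $\tau_t=\min\mathcal{D}_t$, and the per-bundle bound $|\mathcal{D}_t|\le d$ are what keep the union bound polynomial in $d$; combined with the explicit factor $d$ in the coefficient $\frac{2\eta^2 d}{\sigma_{\Psi}}$ and the $d$ martingale increments, they produce the $d^3$ scaling, which I do not expect to be tight but which suffices for the stated constant.
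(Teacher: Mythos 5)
Your skeleton---backward induction from $t=T+1$, substituting Lemma \ref{telescope} into $Z_t$, cancelling the ${\DP}^{\mu^*}_{\mu^{t+1}}$ term against the induction hypothesis, absorbing the surviving quadratic term through the first weight condition, and controlling $\E[\exp(\lambda \|z^t_m\|^2)\,|\,\mathcal{F}_{\tau_t}]$ by a union bound over the at most $2d$ subgaussian summands---is exactly the paper's skeleton. The gap is in your first moment-generating estimate, and it is not a piece of ``measurability bookkeeping'' that verification can repair. For $\tau \in \mathcal{D}_t$ with $\tau > \tau_t$, the iterate $\mu^{\tau}$ is a function of the noise delivered during rounds $\tau_t,\ldots,\tau-1$, i.e., of the very increments you are integrating out; hence $\mu^{\tau}$ is not $\mathcal{F}_{\tau_t}$-measurable, the increments $\xi_{\tau}=\eta\langle z^{\tau},\mu^*-\mu^{\tau}\rangle$ are genuinely \emph{not} conditionally independent given $\mathcal{F}_{\tau_t}$ (so the verification you defer to cannot succeed), and a subgaussian MGF bound with the \emph{random} variance proxy $\frac{2}{\sigma_{\Psi}}{\DP}^{\mu^*}_{\mu^{\tau}}$---which you moreover silently replace by ${\DP}^{\mu^*}_{\mu^t}$---is not legitimate. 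For the same reason, the claimed factorization of $\exp(S_t)$ into a ``martingale part'' times a ``maximal-noise part'' fails: both parts are built from the same window of noise, so they cannot be integrated separately (short of a H\"older splitting you never invoke, which would also inflate the constants).

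The paper closes precisely this hole with a recentering you do not introduce: write $\xi_{\tau}=\eta\langle z^{\tau},\mu^*-\mu^{\tau_t}\rangle+\eta\langle z^{\tau},\mu^{\tau_t}-\mu^{\tau}\rangle$. The first piece, $V_{1,t}$, has exactly zero conditional mean because $\mu^{\tau_t}$ \emph{is} $\mathcal{F}_{\tau_t}$-measurable; the second piece is bounded pathwise, via the $\frac{1}{\sigma_{\Psi}}$-smoothness of $\Psi^*$, by $\frac{2d^2\eta^2}{\sigma_{\Psi}}\|z^t_m\|(L+\|z^t_m\|)$, i.e., it is folded into the maximal-noise term rather than treated as a martingale. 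Then Lemma \ref{chainsum} converts $\sum_{\tau\in\mathcal{D}_t}\eta\|\mu^*-\mu^{\tau}\|$ into $\sqrt{\tfrac{2d^2\eta^2}{\sigma_{\Psi}}{\DP}^{\mu^*}_{\mu^t}}$ plus noise, so that the entire conditional expectation becomes a function of the single scalar $\|z^t_m\|$ with coefficients $a$, $b^2$, $c^2$; a single Taylor expansion (killing only the exactly centered $V_{1,t}$ at first order) combined with the technical Lemma \ref{subgauss}, fed by the MGF bound of Lemma \ref{maximumiid}, yields the $324d(a^2+b^2+c^2)\sigma^2$ estimate, after which the two weight conditions absorb it into $w_t$. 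Without this recentering and the pathwise drift bound, the inductive step as you describe it cannot be completed.
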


The proof, which is deferred to Appendix, relies on an induction approach. 
The intuition behind this approach is that propagation of subgaussian behavior scales with $d^2$, while the subgaussianity of the maxima of empirical process scales with $d$. Thus, the stochastic error of the per-iterate upper estimate scales with $d^3$.

\noindent\textbf{DMD attains Non-equilibrium} With all the preparations above, we now turn to the non-equilibrium analysis. Corollary \ref{mainresult} comes from the fact that the stochastic fluctuation will be absorbed by the sequence $\{w_t\}$ under certain conditions.

\begin{corollary}\label{mainresult}
Let $w_{T+1} = \frac{\sigma_{\Psi}}{1296d^3\sigma^2\eta^2 (T+1) }$ and $ w_t = w_{t+1} +  \frac{648 w^2_{t+1} d^3 \eta^2}{\sigma_{\Psi}} $ for all $1 \leq t \leq T$. The sequence $\{ w_t \}$ satisfies the condition required by Theorem \ref{inductthm}, and for $\delta \in (0, 1)$, the following events hold with probability at least $1 - \delta$: 
\begin{equation}\label{resultwithlr}
    \frac{1}{T} \sum_{t=1}^T (\Phi(\mu^t) - \Phi^*) \leq \mathcal{O} \left( \frac{{\DP}^{\mu^*}_{\mu^1}}{\eta T} +  \frac{\sigma^2}{\sigma_{\Psi}} d^3 (1 + \ln(\frac{1}{\delta}))\eta \right),
\end{equation}
and 
\begin{equation}\label{resultbregman}
    {\DP}^{\mu^*}_{\mu^{T+1}} \leq \mathcal{O}\left({\DP}^{\mu^*}_{\mu^1} + \frac{\sigma^2}{\sigma_{\Psi}} d^3 ( 1 + \ln(\frac{1}{\delta}))\eta^2\right).
\end{equation}
 Setting $\eta = \sqrt{ \frac{ {\DP}^{\mu^*}_{\mu^{1}}}{ \frac{\sigma^2}{\sigma_{\Psi}} d^3 (1 + \ln(\frac{1}{\delta}))T}}$, we have
 \begin{equation}\label{resultsequence}
    \frac{1}{T} \sum_{t=1}^T (\Phi(\mu^t) - \Phi^*) \leq \tilde{\mathcal{O}}\left( d^{\frac{3}{2}} \sqrt{\frac{ \frac{\sigma^2}{\sigma_{\Psi}} {\DP}^{\mu^*}_{\mu^1}   (1 + \ln(\frac{1}{\delta}))}{T}}\right).
 \end{equation}
\end{corollary}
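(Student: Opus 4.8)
The plan is to specialize Theorem~\ref{inductthm} to $t=1$, turn its conditional moment-generating-function estimate into a tail bound, and then decode $S_1$ against its definition in \eqref{auxqtt}. Since $\tau_1=1$ and $\mu^1$ is a deterministic initialization, $\mathcal{F}_{\tau_1}$ carries no randomness, so the theorem gives the unconditional bound $\E[\exp(S_1)] \le \exp\big((w_1-w_{T+1}){\DP}^{\mu^*}_{\mu^1} + C\sum_{i=1}^T w_{i+1}\eta^2\big)$. A Chernoff/Markov argument applied to $\exp(S_1)$, with the threshold tuned so the tail carries mass $\delta$, then yields, with probability at least $1-\delta$, that $S_1 \le (w_1-w_{T+1}){\DP}^{\mu^*}_{\mu^1} + C\sum_{i=1}^T w_{i+1}\eta^2 + \ln(1/\delta)$.

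Before invoking this I would certify that the prescribed $\{w_t\}$ is admissible. The recursion defining $w_t$ is precisely the equality case of the first condition of Theorem~\ref{inductthm}, so that condition holds automatically; the only substantive check is the second, $w_{t+1}\eta^2 d^2 \le \tfrac{\sigma_\Psi}{432 d\sigma^2}$. Since $\{w_t\}$ increases as $t$ decreases, it suffices to bound $w_1$, and the clean route is to telescope reciprocals: writing the recursion as $w_t - w_{t+1} = a\,w_{t+1}^2$ with $a = \tfrac{648 d^3\eta^2\sigma^2}{\sigma_\Psi}$, one has $\tfrac{1}{w_{t+1}} - \tfrac{1}{w_t} = a\tfrac{w_{t+1}}{w_t} \le a$, so summing down to $t=1$ and substituting $w_{T+1} = \tfrac{\sigma_\Psi}{1296 d^3\sigma^2\eta^2(T+1)}$ collapses the estimate to $w_1 \le 2w_{T+1}$. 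Thus the whole sequence is pinned in $[w_{T+1}, 2w_{T+1}]$; this near-constancy both discharges the second condition and is the lever that lets the weighted sums in $S_1$ reduce to unweighted ones.

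Decoding $S_1 = \sum_{i=1}^T Z_i$, the Bregman increments telescope at the constant weight $w_{T+1}$ into $w_{T+1}({\DP}^{\mu^*}_{\mu^{T+1}} - {\DP}^{\mu^*}_{\mu^1})$, while the functional part is handled by three facts: the arrival bundles $\{\mathcal{D}_i\}$ partition $\{1,\dots,T\}$, so $\sum_i\sum_{\tau\in\mathcal{D}_i}(\cdot)=\sum_{\tau=1}^T(\cdot)$ and $\sum_i|\mathcal{D}_i|=T$; the gaps satisfy $\Phi^\tau-\Phi^*\ge 0$; and $w_{T+1}\le w_{i+1}\le 2w_{T+1}$. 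Together these sandwich the weighted functional term from below by $w_{T+1}\eta\sum_{\tau=1}^T(\Phi^\tau-\Phi^*)$, with the subtracted bias at most $\tfrac{4w_{T+1}\eta^2 dL^2 T}{\sigma_\Psi}$. Substituting into the tail bound, dividing by $w_{T+1}\eta T$, and using $L\le\kappa\sigma$ together with $C=\Theta(\sigma^2 d^3\sigma_\Psi^{-1})$ and the fact that $C\sum_i w_{i+1}\eta^2$ is constant in $T$ (the factor $T$ in $\sum_i w_{i+1}$ cancels the $T^{-1}$ hidden in $w_{T+1}$) yields \eqref{resultwithlr}; isolating instead the $w_{T+1}{\DP}^{\mu^*}_{\mu^{T+1}}$ term, dropping the nonnegative functional sum, and dividing by $w_{T+1}$ yields \eqref{resultbregman}. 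Finally \eqref{resultsequence} is pure $\eta$-optimization: equating the two terms of \eqref{resultwithlr} forces the stated $\eta$ and makes both equal to $\sqrt{\tfrac{\sigma^2}{\sigma_\Psi}{\DP}^{\mu^*}_{\mu^1}(1+\ln(1/\delta))/T}$, with the $d^{3/2}$ emerging from $\sqrt{d^3}$.

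I expect the weight control — proving $w_1\le 2w_{T+1}$ via reciprocal telescoping — to be the crux, because it simultaneously certifies admissibility for Theorem~\ref{inductthm} and guarantees that the $i$-dependent weights in $S_1$ collapse to $w_{T+1}$ up to universal constants. Once this near-constancy is secured, the remaining steps are careful but routine rearrangements of the tail bound.
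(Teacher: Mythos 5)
Your proposal is correct and follows essentially the same route as the paper's proof: equality-case recursion for the first condition, Markov/Chernoff applied to $\exp(S_1)$ via Theorem~\ref{inductthm} at $t=1$, telescoping the Bregman increments at weight $w_{T+1}$, using the partition $\cup_t \mathcal{D}_t = \{1,\dots,T\}$ and nonnegativity of the gaps, and finally balancing the two terms in $\eta$. In fact your reciprocal-telescoping argument ($\tfrac{1}{w_{t+1}}-\tfrac{1}{w_t}\le a$, hence $w_1\le 2w_{T+1}$) supplies an explicit justification for the sandwich $\tfrac{1}{2A}\le w_t\le\tfrac{1}{A}$ that the paper asserts without proof, so your write-up is, if anything, slightly more complete on that point.
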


\begin{proof}
For simplicity, let $c_d := 108d^3$. We first show that the sequence $\{w_t\}$ satisfies the conditions required by Theorem \ref{inductthm}: 
 \begin{align*}
w_{t+1} + w^2_{t+1} \frac{6  c_d \eta^2 \sigma^2}{\sigma_{\Psi}}  \leq w_{t},   \frac{w_{t+1}\eta^2}{\sigma_{\Psi}} \leq \frac{1}{  4c_d \sigma^2}.
 \end{align*}
 Let $A = 6 c_d \sigma^2 \sigma_{\Psi}^{-1} \eta^2 (T+1 )$. Set $w_{T+1} = \frac{1}{2A}$. For $1 \leq t \leq T$, set $w_t$ such that $w_{t+1} + w^2_{t+1} \frac{6  c_d \eta^2 \sigma^2}{\sigma_{\Psi}} = w_{t}$, the first condition is automatically satisfied. 
 To verify the second condition, notice that in this setup, $w_t\eta^2 \leq \frac{\eta^2}{A} = \frac{\sigma_{\Psi}}{  6 c_d \sigma^2 (T+1 )} \leq \frac{\sigma_{\Psi}}{  4c_d\sigma^2 }$. Now let $K =  (w_1 - w_{T+1}){\DP}^{\mu^*}_{\mu^1} + C \sum_{t=1}^Tw_{t+1} \eta^2  + \ln (\frac{1}{\delta})$. By Markov inequality and Theorem \ref{inductthm},
     \begin{align*}
        & \quad \mathbb{P} [S_1 \geq K ] 
        \\ & \leq \exp(-K) \E[ \exp (S_1)] 
         \\ & \leq \exp(-K) \exp((w_1 - w_{T+1}){\DP}^{\mu^*}_{\mu^1} + C \sum_{t=2}^{T+1}w_{t} \eta^2 )
         \\ &=  \delta.
     \end{align*}
      Since 
      $   S_1 = \sum_{t=1}^T w_{t+1} \eta \sum_{\tau \in \mathcal{D}_t} (\Phi(\mu^{\tau}) - \Phi^*)  - \frac{ 2d L^2}{\sigma_{\Psi}}\sum_{t=1}^T w_{t+1} \eta^2
        + w_{T+1} ({\DP}^{\mu^*}_{\mu^T} - {\DP}^{\mu^*}_{\mu^1}),$
      we have that with probability at least $1 - \delta$, 
      \begin{equation*}
        \begin{aligned}
           & \quad  \sum_{t=1}^T w_{t+1} \eta (\sum_{\tau \in \mathcal{D}_t} \Phi(\mu^{\tau}) - \Phi^*)  + w_{T+1}{\DP}^{\mu^*}_{\mu^{T+1}}
            \\ & \leq w_1{\DP}^{\mu^*}_{\mu^{1}} + (\frac{ 2d L^2}{\sigma_{\Psi}} + C)\sum_{t=1}^T w_{t+1} \eta^2  + \ln(\frac{1}{\delta}) .
        \end{aligned}
      \end{equation*}
    Since $w_{T+1} = \frac{1}{2A}$ and $\frac{1}{2A} \leq w_t \leq \frac{1}{A}$ for $1 \leq t \leq T+1$, we plug them into above and obtain
      \begin{align*}
         &\quad \ \eta \sum_{t=1}^T \sum_{\tau \in \mathcal{D}_t}(\Phi(\mu^{\tau}) - \Phi^*) + {\DP}^{\mu^*}_{\mu^{T+1}}  
          \\ & \leq 2 {\DP}^{\mu^*}_{\mu^{1}} + 2(\frac{ 2d L^2}{\sigma_{\Psi}} + C) \eta^2 T + 2A \ln(\frac{1}{\delta})
         \\ & \leq  2 {\DP}^{\mu^*}_{\mu^{1}} +  2(\frac{\sigma^2}{\sigma_{\Psi}} ( B + A \ln(\frac{1}{\delta})))\eta^2 T ,
      \end{align*}
      where $B = \mathcal{O}(d^3)$. Dividing both side by $\eta$ yields
      \begin{equation*}
      \begin{aligned}
            \frac{1}{T} \sum_{t=1}^T (\Phi(\mu^t) - \Phi^*) 
            &  \leq \frac{{\DP}^{\mu^*}_{\mu^1}}{\eta T} +  2(\frac{\sigma^2}{\sigma_{\Psi}} B + A\ln (\frac{1}{\delta}) )\eta
       \\ & \leq \mathcal{O} \left( \frac{{\DP}^{\mu^*}_{\mu^1}}{\eta T} +  \frac{\sigma^2}{\sigma_{\Psi}} d^3 (1 + \ln(\frac{1}{\delta}))\eta \right).
      \end{aligned}
      \end{equation*}
      and ${\DP}^{\mu^*}_{\mu^{T+1}} \leq 2 {\DP}^{\mu^*}_{\mu^{1}} +  2(\frac{\sigma^2}{\sigma_{\Psi}} ( B + A \ln(\frac{1}{\delta})))\eta^2 T$.
      Setting $\eta = \sqrt{ \frac{ {\DP}^{\mu^*}_{\mu^{1}}}{ \frac{\sigma^2}{\sigma_{\Psi}} d^3 (1 + \ln(\frac{1}{\delta}))T}}$ gives the results.

\end{proof}
Corollary
 \ref{mainresult} immediately implies the  resilience in the non-equilibrium sense. 
\begin{prop}
For $\delta \in (0,1)$, the DMD Algorithm \ref{delayedfeedbackalgo} with $\eta = \mathcal{O}(\sqrt{ d^{-3}T^{-1}})$ is $( \epsilon, \delta)$-resilient, which gives a $(\epsilon, \delta)$-WANES, with $\epsilon = \tilde{\mathcal{O}}(\sqrt{\frac{d^3}{T}})$.
\end{prop}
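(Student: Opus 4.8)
The plan is to obtain the Proposition as an almost immediate consequence of Corollary \ref{mainresult}, by exploiting the convexity of the Mean Beckman Potential $\Phi$ to transfer the time-averaged bound \eqref{resultsequence} onto the averaged iterate, and then matching the result against the WANES criterion of Definition \ref{wanes}. First, I would invoke Jensen's inequality. Since $\Phi$ is convex (a consequence of Standing Assumption \ref{standingassumption1}) and $\Delta$ is convex, the averaged flow $\bar{\mu}^T = \frac{1}{T}\sum_{t=1}^T \mu^t$ lies in $\Delta$ and satisfies
\begin{equation*}
    \Phi(\bar{\mu}^T) - \Phi^* \;\leq\; \frac{1}{T}\sum_{t=1}^T \bigl(\Phi(\mu^t) - \Phi^*\bigr).
\end{equation*}
This reduces the task of bounding the suboptimality of the \emph{averaged} flow to controlling the time-averaged functional gap, which is precisely the quantity estimated in the Corollary.

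Next, I would apply \eqref{resultsequence} with the prescribed learning rate $\eta = \sqrt{{\DP}^{\mu^*}_{\mu^1}/(\tfrac{\sigma^2}{\sigma_{\Psi}} d^3 (1 + \ln(\tfrac{1}{\delta}))T)}$. Treating $\sigma$, $\sigma_{\Psi}$, and the Bregman radius ${\DP}^{\mu^*}_{\mu^1}$ as absolute constants, this choice is of order $\mathcal{O}(\sqrt{d^{-3}T^{-1}})$, matching the rate asserted in the statement. Corollary \ref{mainresult} then guarantees that, with probability at least $1-\delta$,
\begin{equation*}
    \frac{1}{T}\sum_{t=1}^T \bigl(\Phi(\mu^t) - \Phi^*\bigr) \;\leq\; \tilde{\mathcal{O}}\!\left(\sqrt{\tfrac{d^3}{T}}\right) \;=:\; \epsilon .
\end{equation*}
Chaining this with the Jensen bound on the same high-probability event yields $\Phi(\bar{\mu}^T) - \Phi^* \leq \epsilon$, i.e. $\bar{\mu}^T \in \mathcal{C}_\epsilon$, with probability at least $1-\delta$. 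By Definition \ref{wanes} the law $\mathbb{P}_T$ of the DMD trajectory over $(\Delta^T, \mathcal{B}^T)$ is therefore an $(\epsilon,\delta)$-WANES and the algorithm is $(\epsilon,\delta)$-resilient with $\epsilon = \tilde{\mathcal{O}}(\sqrt{d^3/T})$, as claimed.

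The genuine difficulty in this result has already been discharged upstream, in the per-iterate telescoping of Lemma \ref{telescope} and the Chernoff-type concentration of Theorem \ref{inductthm}; this final step is essentially a one-line corollary. Accordingly, the only points requiring care are bookkeeping: confirming that the high-probability event furnished by Corollary \ref{mainresult} is measurable with respect to the product Borel algebra $\mathcal{B}^T$, so that the guarantee is legitimately a statement about $\mathbb{P}_T$ on $(\Delta^T,\mathcal{B}^T)$; and checking that suppressing $\sigma$, $\sigma_{\Psi}$, ${\DP}^{\mu^*}_{\mu^1}$, and $\ln(\tfrac{1}{\delta})$ inside the $\tilde{\mathcal{O}}$ notation is consistent with isolating the $\sqrt{d^3/T}$ scaling. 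I do not anticipate a substantive obstacle in either, since both reduce to the convexity of $\Phi$ and the explicit constants tracked in the Corollary's proof.
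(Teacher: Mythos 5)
Your proposal is correct and follows essentially the same route as the paper: both apply Jensen's inequality (convexity of $\Phi$) to pass from the time-averaged gap bounded in Corollary \ref{mainresult} to the averaged iterate $\bar{\mu}^T$, then match the resulting high-probability bound against Definition \ref{wanes}. Your write-up is simply a more detailed version of the paper's one-line argument (and implicitly corrects the paper's typo, which attributes the step to convexity of $\Phi^*$ rather than $\Phi$).
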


 \begin{proof}
     For $\mu^1, \ldots, \mu^T$ produced by the DMD algorithm, by the convexity of $\Phi^*$, $\Phi(\bar{\mu}^T) - \Phi^* \leq \frac{1}{T}\sum_{t=1}^T \Phi(\mu^t) - \Phi^*$ which satisfies Corollary \ref{mainresult} with $1 - \delta$, hence the statement follows.

 \end{proof}

% \section{Experimental Design}

% Transportation network, stochasticity setting, delayed buddget, etc.., 

\section{Conclusion} \label{cc}

%Here goes the conclusion. 
In this paper, we have investigated the resilience of DMD-based INS under adversarial delay attacks. 
We made some mild assumptions to handle the challenges that arose in finite-time analysis, obtaining a high probability bound for the performance loss.
% The proposed non-Equilibrium learning has enabled a feedback-enabled resiliency mechanism and provided post-attack  resiliency assessment and design methodologies. 
With the aid of the non-equilibrium notion, we have demonstrated the self-restoring capability of INS to recover from information-delaying attacks.

%Here goes the future work
Future research would focus on developing scalable and distributed strategies to handle adversarial delays to improve the defense mechanism in the face of cyber-physical threats.
We would also refine the analysis of concentration arguments, improving the order of existing results to match the lower bound in the deterministic setting. 

\bibliographystyle{unsrt}
\bibliography{reference}

\begin{thebibliography}{10}

\bibitem{wardrop1952road}
John~Glen Wardrop.
\newblock Road paper. some theoretical aspects of road traffic research.
\newblock {\em Proceedings of the institution of civil engineers},
  1(3):325--362, 1952.

\bibitem{pan2022informational}
Yunian Pan and Quanyan Zhu.
\newblock On poisoned wardrop equilibrium in congestion games, 2022.

\bibitem{xu2017game}
Zhiheng Xu and Quanyan Zhu.
\newblock A game-theoretic approach to secure control of communication-based
  train control systems under jamming attacks.
\newblock In {\em Proceedings of the 1st International Workshop on Safe Control
  of Connected and Autonomous Vehicles}, pages 27--34, 2017.

\bibitem{zhu2020cross}
Quanyan Zhu and Zhiheng Xu.
\newblock {\em Cross-Layer Design for Secure and Resilient Cyber-Physical
  Systems}.
\newblock Springer, 2020.

\bibitem{ishii2022security}
Hideaki Ishii and Quanyan Zhu.
\newblock Security and resilience of control systems.
\newblock 2022.

\bibitem{resiliencepaper}
Yunian Pan, Tao Li, and Quanyan Zhu.
\newblock On the resilience of traffic networks under non-equilibrium learning,
  2022.

\bibitem{pan2021efficient}
Yunian Pan and Quanyan Zhu.
\newblock Efficient episodic learning of nonstationary and unknown zero-sum
  games using expert game ensembles.
\newblock In {\em 2021 60th IEEE Conference on Decision and Control (CDC)},
  pages 1669--1676. IEEE, 2021.

\bibitem{tao22confluence}
Tao Li, Guanze Peng, Quanyan Zhu, and Tamer Ba\c{s}ar.
\newblock {The Confluence of Networks, Games, and Learning a Game-Theoretic
  Framework for Multiagent Decision Making Over Networks}.
\newblock {\em IEEE Control Systems}, 42(4):35--67, 2022.

\bibitem{krichene2014convergence}
Walid Krichene, Benjamin Drigh{\`e}s, and Alexandre Bayen.
\newblock On the convergence of no-regret learning in selfish routing.
\newblock In {\em International Conference on Machine Learning}, pages
  163--171. PMLR, 2014.

\bibitem{krichene2015convergence}
Walid Krichene, Syrine Krichene, and Alexandre Bayen.
\newblock Convergence of mirror descent dynamics in the routing game.
\newblock In {\em 2015 European Control Conference (ECC)}, pages 569--574.
  IEEE, 2015.

\bibitem{Vu2021}
Dong~Quan Vu, Kimon Antonakopoulos, and Panayotis Mertikopoulos.
\newblock {Fast Routing under Uncertainty: Adaptive Learning in Congestion
  Games with Exponential Weights}.
\newblock {\em Advances in Neural Information Processing Systems},
  18(NeurIPS):14708--14720, 2021.

\bibitem{lou2016decentralization}
Jian Lou and Yevgeniy Vorobeychik.
\newblock Decentralization and security in dynamic traffic light control.
\newblock In {\em Proceedings of the Symposium and Bootcamp on the Science of
  Security}, pages 90--92, 2016.

\bibitem{mecheva2020cybersecurity}
Teodora Mecheva and Nikolay Kakanakov.
\newblock Cybersecurity in intelligent transportation systems.
\newblock {\em Computers}, 9(4):83, 2020.

\bibitem{joulani2013online}
Pooria Joulani, András György, and Csaba Szepesvári.
\newblock Online learning under delayed feedback, 2013.

\bibitem{NIPS2011_f0e52b27}
Alekh Agarwal and John~C Duchi.
\newblock Distributed delayed stochastic optimization.
\newblock In J.~Shawe-Taylor, R.~Zemel, P.~Bartlett, F.~Pereira, and K.Q.
  Weinberger, editors, {\em Advances in Neural Information Processing Systems},
  volume~24. Curran Associates, Inc., 2011.

\bibitem{NIPS2015_advdelay}
Kent Quanrud and Daniel Khashabi.
\newblock Online learning with adversarial delays.
\newblock In C.~Cortes, N.~Lawrence, D.~Lee, M.~Sugiyama, and R.~Garnett,
  editors, {\em Advances in Neural Information Processing Systems}, volume~28.
  Curran Associates, Inc., 2015.

\bibitem{tao_info}
Tao Li, Yuhan Zhao, and Quanyan Zhu.
\newblock {The role of information structures in game-theoretic multi-agent
  learning}.
\newblock {\em Annual Reviews in Control}, 53:296--314, 2022.

\bibitem{BECK2003167}
Amir Beck and Marc Teboulle.
\newblock Mirror descent and nonlinear projected subgradient methods for convex
  optimization.
\newblock {\em Operations Research Letters}, 31(3):167--175, 2003.

\bibitem{ene2022high}
Alina Ene and Huy~L. Nguyen.
\newblock High probability convergence for accelerated stochastic mirror
  descent, 2022.

\end{thebibliography}

\appendix

\begin{proof}[Proof of Theorem \ref{inductthm}]
\label{chernoffproof}
      We proceed by induction on $t$. Consider the base case $t = T + 1$, $S_t = 0$ and $(w_t - w_{T+1}){\DP}^{\mu^*}_{\mu^t} = 0$, so the inequality follows. 
      Consider $1 \leq t \leq T$, suppose $\tau_{t+1} > \tau_t$ (this assumption is dispensable but simplifies the analysis); we have 
      \begin{equation} \label{induct0}
      \begin{aligned}
          \E [\exp(S_t) | \mathcal{F}_{\tau_t}] &= \E [\exp(Z_t + S_{t+1})|\mathcal{F}_{\tau_t}] \\ &= \E \E [\exp(Z_t + S_{t+1})|\mathcal{F}_{\tau_{t+1}}] | \mathcal{F}_{\tau_t} ].
      \end{aligned}
    \end{equation}
    If $ \tau_{t+1} < \tau_t$, we should have but the argument holds similarly as in this case $\mathcal{F}_{\tau_{t+1}} \subseteq \mathcal{F}_{\tau_t}$. 
    Now suppose the inequality holds for $t+1$, $Z_t$ is determined given $\mathcal{F}_{\tau_{t+1}}$.
    By the induction hypothesis, we have, w.p. 1, 
    \begin{equation} \label{induct1}
    \begin{aligned}
       & \quad \E [\exp(Z_t + S_{t+1})|\mathcal{F}_{\tau_{t+1}}]  \leq \exp(Z_t) \\& \exp\left( (w_{t+1} - w_{T+1}) {\DP}^{\mu^*}_{\mu^{t+1}} +  C \sum_{i=t+1}^T w_{i+1} \eta^2 \right).
    \end{aligned}
    \end{equation}
    By Lemma \ref{telescope} we have,
    \begin{align*}
        & \quad \sum_{ \tau \in\mathcal{D}_t}(\eta (\Phi^{\tau}-\Phi^*) - \frac{2 \eta^2 d L^2}{\sigma_{\Psi}} ) \\ & \leq 
         \sum_{\tau \in \mathcal{D}_t} \xi_{\tau} - ({\DP}^{\mu^{*}}_{\mu^{t+1}} -{\DP}^{\mu^{*}}_{\mu^t}) + \sum_{\tau \in \mathcal{D}_t } \frac{2 \eta^2 d}{ \sigma_{\Psi}}  \| z^t_m\|^2 ,
    \end{align*}
    and thus,
   \begin{align*}
    % Z_t  & = w_{t+1}  \sum_{ \tau \in\mathcal{D}_t}(\eta (\Phi^{\tau}-\Phi^*) - \frac{2 \eta^2 d L^2}{\sigma_{\Psi}} ) \\ &
    % \quad + w_{T+1} ({\DP}^{\mu^{*}}_{\mu^{t+1}} -{\DP}^{\mu^{*}}_{\mu^t} ) 
    % \\ 
    Z_t & \leq w_{t+1} (\sum_{\tau \in \mathcal{D}_t} \xi_{\tau} -({\DP}^{\mu^{*}}_{\mu^{t+1}} - {\DP}^{\mu^{*}}_{\mu^t})
    \\ &   \quad +  \frac{2\eta^2d}{ \sigma_{\Psi}}  \| z^t_m\|^2)   + w_{T+1} ({\DP}^{\mu^{*}}_{\mu^{t+1}} - {\DP}^{\mu^{*}}_{\mu^t}) 
    \\ & = w_{t+1} (\sum_{\tau \in \mathcal{D}_t} \xi_{\tau}+ \frac{2\eta^2d}{ \sigma_{\Psi}}  \| z^t_m\|^2)
    \\ & \quad - (w_{t+1} - w_{T+1} ) ({\DP}^{\mu^{*}}_{\mu^{t+1}} -{\DP}^{\mu^{*}}_{\mu^t}) .
   \end{align*}    
    Plugging into \eqref{induct1}, we obtain
    \begin{align*}
       & \quad  \E [\exp(Z_t + S_{t+1}) | \mathcal{F}_{\tau_{t+1}} ] \\& \leq \exp \bigg( (w_{t+1} - w_{T+1}) {\DP}^{\mu^*}_{\mu^t} + w_{t+1} (\sum_{\tau \in \mathcal{D}_t} \xi_{\tau }   \\ & \quad +  \frac{2\eta^2d}{ \sigma_{\Psi}}  \| z^t_m\|^2) + C \sum_{i=t+1}^T w_{i+1}\eta^2 \bigg), 
    \end{align*}
    Plugging into \eqref{induct0}, we arrive at 
    \begin{equation} \label{inductstep}
         \begin{aligned} 
        & \quad \E [\exp (S_t) | \mathcal{F}_{\tau_t} ] \\&  \leq \exp\bigg( (w_{t+1} - w_{T+1}) {\DP}^{\mu^*}_{\mu^t} + C \sum_{i=t+1}^T w_{i+1} \eta^2 \bigg)  \\& \quad \E [\exp(w_{t+1}(\sum_{\tau \in \mathcal{D}_t} \xi_{\tau } + \frac{2 \eta^2 d }{ \sigma_{\Psi}}  \| z^t_m\|^2 ))| \mathcal{F}_{\tau_t}] .
    \end{aligned}
    \end{equation}
    The rest of the business is to take care of the conditional expectation term.
    
    Since $z^{\tau}$ is an i.i.d. process, we can proceed with the fact that $\E [  \langle z^{\tau}, \mu^* - \mu^{\tau_t}\rangle | \mathcal{F}_{\tau_t} ] = 0 $ for any $\tau \in \mathcal{D}_t$. 
    Through Taylor's expansion,
    
    \begin{align*}
     & \quad  \E [ \exp(w_{t+1}  (\sum_{\tau \in \mathcal{D}_t } \eta \langle z^{\tau}, \mu^* - \mu^{\tau} \rangle +  \frac{2\eta^2d}{\sigma_{\Psi}} \|z^t_m\|^2 ))  | \mathcal{F}_{\tau_t}] 
    \\ & =  \E [\exp(w_{t+1} \underbrace{\sum_{\tau \in \mathcal{D}_t} \eta \langle z^{\tau}, \mu^* - \mu^{\tau_t}  \rangle }_{V_{1,t}}
    \\ & \quad  + w_{t+1} (\underbrace{\sum_{\tau \in \mathcal{D}_t } \eta \langle z^{\tau}, \mu^{\tau_t} - \mu^{\tau}  \rangle
     + \frac{2\eta^2 d}{\sigma_{\Psi}} \|z^t_m\|^2) )}_{V_{2,t} }| \mathcal{F}_{\tau_t} ]
    \\ & =    \E [ 1 +  w_{t+1}V_{2,t} + \sum_{i =2}^{\infty }\frac{1}{i!} (w_{t+1}( V_{1,t} + V_{2,t} ) )^i | \mathcal{F}_{\tau_t}] ,
    \end{align*}
    where we leveraged that $\E [w_{t+1}V_{1,t} | \mathcal{F}_{\tau_t}] = 0$.
    \begin{align*}
        & \sum_{\tau \in \mathcal{D}_t } \eta \langle z^{\tau}, \mu^{\tau_t} - \mu^{\tau} \rangle  \leq  \eta \| z^t_m\| \sum_{\tau \in \mathcal{D}_t } \|\mu^{\tau_t} - \mu^{\tau}\|  
        \\ & \leq  \frac{\eta^2}{\sigma_{\Psi}} \|z^t_m \| \sum_{\tau \in \mathcal{D}_t} \sum_{ r \in \cup_{s = \tau_t}^{\tau}\mathcal{D}_s }\| \ell^r \| \leq \frac{2d^2\eta^2}{\sigma_{\Psi}} \|z^t_m \|  (L + \| z^t_m\|) 
        \\ & = \frac{ 2d^2 \eta^2}{ \sigma_{\Psi}}\|z^t_m\|^2 + \frac{2 d^2\eta^2 L}{ \sigma_{\Psi}}  \|z^t_m\|.
        \end{align*}
Hence, $
 V_{2, t} \leq \frac{  (2+2d)d \eta^2}{ \sigma_{\Psi}}\|z^t_m\|^2 + \frac{2 d^2\eta^2 L}{ \sigma_{\Psi}}  \|z^t_m\|  \leq  \frac{5d^2 \eta^2}{\sigma_{\Psi}} \|z^t_m\|^2 + \frac{d^2 \eta^2}{ \sigma_{\Psi}} \kappa^2 \sigma^2.$
By Lemma \ref{chainsum}, we have,
    \begin{align*}
        & \quad   V_{1,t} + V_{2, t} 
       \\ &  \leq  \|z^t_m\| \sum_{\tau \in \mathcal{D}_t}\eta \| \mu^*- \mu^{\tau}\|    + \frac{5\eta^2 d^2}{\sigma_{\Psi}} \|z^t_m\|^2 +\frac{d^2\eta^2}{\sigma_{\Psi}} \kappa^2 \sigma^2
         \\ & \leq \sqrt{\frac{2d^2\eta^2 }{  \sigma_{\Psi}} {\DP}^{\mu^*}_{\mu^t}} \|z^t_m\| +  \frac{  2d^2 \eta^2 }{\sigma_{\Psi}} \kappa^2 \sigma^2  + \frac{8 \eta^2 d^2}{\sigma_{\Psi}} \| z^t_m\|^2. 
    \end{align*}
Thus,
    \begin{align*}
     & \quad  \E [ 1 +  w_{t+1}V_{2,t} + \sum_{i =2}^{\infty }\frac{1}{i!} (w_{t+1}( V_{1,t} + V_{2,t} ) )^i | \mathcal{F}_{\tau_t}]
    \\& \leq  \E\bigg[ 1 + \underbrace{w_{t+1}\frac{5\eta^2 d^2}{\sigma_{\Psi}}}_{ \leq b^2}\|z^t_m\|^2  + \underbrace{w_{t+1}\frac{d^2 \eta^2 }{\sigma_{\Psi}}\kappa^2 }_{ \leq c^2}\sigma^2 \\ &  + \sum_{i =2}^{\infty } \frac{1}{i!} (\underbrace{w_{t+1} \sqrt{\frac{2d^2\eta^2 }{  \sigma_{\Psi}} {\DP}^{\mu^*}_{\mu^t}} }_{a} \| z^t_m\| \\ & + \underbrace{w_{t+1}\frac{2d^2 \eta^2 }{\sigma_{\Psi}}\kappa^2 }_{c^2}\sigma^2 + \underbrace{w_{t+1}\frac{8\eta^2 d^2}{\sigma_{\Psi}}}_{b^2}\|z^t_m\|^2)^i | \mathcal{F}_{\tau_t}\bigg].
    \end{align*}
Applying Lemma \ref{subgauss} we obtain
\begin{align*}
 & \text{R.H.S} \leq  \exp\bigg( 324d \big( w_{t+1}^2\frac{2d^2\eta^2}{\sigma_{\Psi}}{\DP}^{\mu^*}_{\mu^t}  \\ 
 & \quad  + w_{t+1}\frac{\eta^2 d^2}{\sigma_{\Psi}}(  8 + \kappa^2) \big) \sigma^2 \bigg), 
\end{align*} 
under the conditions that $w_{t+1} \frac{8\eta^2 d^3}{\sigma_{\Psi}} \leq \frac{1}{432 \sigma^2}$.
Incorporating it into \eqref{inductstep}, we have,
    \begin{align*}
         & \quad \E [\exp(S_t) | \mathcal{F}_{\tau_t} ]
        \\ &  \leq \exp( (w_{t+1} + \frac{648 w^2_{t+1} d^3}{\sigma_{\Psi} \eta^{-2}\sigma^{-2}}  - w_{T+1}) {\DP}^{\mu^*}_{\mu^t}  + C \sum_{i=t+1}^{T+1} w_{i} \eta^2 )) \\ 
        & \leq \exp\bigg( (w_t - w_{T+1}) {\DP}^{\mu^*}_{\mu^t} + C \sum_{i=t}^T w_{i+1} \eta^2 \bigg), 
    \end{align*}
with probability $1$. Thus the statement follows.

\end{proof}

\begin{lemma} \label{maximumiid}
  For all $t = 1, \ldots, T$, $ \|z^t_m\|$ satisfies \eqref{maxiiid} conditioned on $\mathcal{F}_{\tau_t}$, for all $\lambda \in \R$ such that $|\lambda| \leq \frac{1}{\sqrt{108d}\sigma}$, we have
  \begin{equation}
      \label{maxiiid}
       \E \left[ \exp (  \lambda^2 \| z^t_m \|^2  | \mathcal{F}_{\tau_t} \right] \leq  \exp(216 d\lambda^2 \sigma^2).
  \end{equation}
\end{lemma}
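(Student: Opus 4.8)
The plan is to control the moment generating function of the maximum-of-squares
$\|z^t_m\|^2 = \max_{r \in \cup_{s=\tau_t}^t \mathcal{D}_s} \|z^r\|^2$ by reducing it to a sum of per-index MGFs, exploiting the two structural facts already in place: each increment $\|z^r\|$ is $\sigma$-subgaussian by Standing Assumption \ref{standingassumption2}, and the index set $\cup_{s=\tau_t}^t \mathcal{D}_s$ has cardinality at most $2d$ (the count established before the proofs). The delicate point is that the naive union bound $\exp(\lambda^2\max_r \|z^r\|^2) \le \sum_r \exp(\lambda^2\|z^r\|^2)$ would leave a spurious prefactor $2d$ that survives at $\lambda = 0$ and hence would be unusable inside the induction of Theorem \ref{inductthm}. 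Instead I would invoke the sharper pointwise inequality $\max_r \exp(a_r) \le 1 + \sum_r(\exp(a_r)-1)$, valid for $a_r \ge 0$, which preserves the correct normalization $\E[\cdot]\to 1$ as $\lambda\to 0$ and pushes the cardinality factor into the exponent.

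First I would bound a single term. Writing $\theta := \lambda^2\sigma^2$, the hypothesis $|\lambda| \le (\sqrt{108d}\,\sigma)^{-1}$ gives $\theta \le \tfrac{1}{108d} \le 1$. Conditioning on $\mathcal{F}_{\tau_t}$ together with the realized flow $\mu^r$, Standing Assumption \ref{standingassumption2} applies verbatim (the subgaussian constant $\sigma$ being uniform over $\mu\in\Delta$), so $\E[\exp(\|z^r\|^2/\sigma^2)\mid \mathcal{F}_{\tau_t}] \le e$ almost surely by the tower property. Applying conditional Jensen to the concave map $y\mapsto y^{\theta}$ (legitimate since $\theta\in[0,1]$) with $Y=\exp(\|z^r\|^2/\sigma^2)$ then yields the per-index estimate
\begin{equation*}
  \E[\exp(\lambda^2\|z^r\|^2)\mid \mathcal{F}_{\tau_t}] = \E[Y^{\theta}\mid \mathcal{F}_{\tau_t}] \le \bigl(\E[Y\mid \mathcal{F}_{\tau_t}]\bigr)^{\theta} \le e^{\theta} = \exp(\lambda^2\sigma^2).
\end{equation*}

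Next I would assemble the pieces. Taking conditional expectations of the refined maximum bound and inserting the per-index estimate over the at most $2d$ indices gives
\begin{equation*}
  \E[\exp(\lambda^2\|z^t_m\|^2)\mid \mathcal{F}_{\tau_t}] \le 1 + 2d\bigl(\exp(\lambda^2\sigma^2)-1\bigr).
\end{equation*}
Since $\lambda^2\sigma^2 \le 1$, I would linearize with $\exp(x)-1 \le 2x$ on $[0,1]$ and then use $1+y\le e^y$, obtaining $\E[\exp(\lambda^2\|z^t_m\|^2)\mid \mathcal{F}_{\tau_t}] \le \exp(4d\lambda^2\sigma^2) \le \exp(216\,d\lambda^2\sigma^2)$, which is the claimed bound (the constant $216$ being enormously loose, leaving room for the downstream constants in Theorem \ref{inductthm}).

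The only non-routine step is the choice of the refined union bound over the naive one, together with the justification of the conditional subgaussian estimate: the naive bound fails to reproduce the value $1$ at $\lambda=0$, and a factor $2d$ outside the exponential would break the telescoping induction. Everything else — the cardinality count $\le 2d$, the conditional Jensen step, and the final linearization — is elementary once the restricted range of $\lambda$ secures $\lambda^2\sigma^2 \le 1$.
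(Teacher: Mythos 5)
Your proof is correct, but it takes a genuinely different route from the paper's. The paper works at the level of tail probabilities: it converts the subgaussian MGF hypothesis into a Chernoff tail bound $\mathbb{P}\{\|z\|\geq u\}\leq e\, e^{-u^2/\sigma^2}$, applies the union bound over the at most $2d$ indices of $\cup_{s=\tau_t}^t\mathcal{D}_s$ to the tails, integrates to bound every moment $\E|Z|^p$ via the Gamma function, and then reassembles the MGF of $\|z^t_m\|^2$ through a Taylor expansion controlled by Stirling's bound; the resulting geometric series is summable only under the stated restriction on $\lambda$, and the step $\frac{1}{1-x}\leq e^{2x}$ produces the constant $216d$. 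You instead stay at the MGF level throughout: the pointwise inequality $\max_r e^{a_r}\leq 1+\sum_r(e^{a_r}-1)$ plays the role of the union bound while preserving the normalization $\E[\cdot]\to 1$ as $\lambda\to 0$, each index's MGF is controlled by conditional Jensen applied to the concave power $y\mapsto y^{\theta}$ with $\theta=\lambda^2\sigma^2\leq 1$, and elementary linearization ($e^x-1\leq 2x$ on $[0,1]$, then $1+y\leq e^y$) finishes. Your route is more elementary (no Gamma function, no Stirling), in fact only uses the weaker condition $\lambda^2\sigma^2\leq 1$ rather than the full hypothesis $|\lambda|\leq 1/(\sqrt{108d}\sigma)$, and yields the sharper exponent $4d\lambda^2\sigma^2$, which trivially implies the claimed $216d\lambda^2\sigma^2$; you also treat the conditioning on $\mathcal{F}_{\tau_t}$ (via the tower property and uniformity of the subgaussian constant over $\mu\in\Delta$) more explicitly than the paper does. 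What the paper's tail-integration route buys is robustness: it would survive if one only had tail-probability control rather than an exact MGF hypothesis, at the price of lossier constants at each conversion. Your observation about why the naive bound $\E[\exp(\lambda^2\|z^t_m\|^2)]\leq 2d\,e^{\lambda^2\sigma^2}$ is unusable is also on point: its multiplicative prefactor does not vanish as $\lambda\to 0$, so it cannot be absorbed into an exponent of the form required by Lemma \ref{subgauss} and would compound across the induction steps of Theorem \ref{inductthm}.
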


% Straightforward computation leads to,  
%  \begin{align*}
%      &  \E \left[ \exp ( \lambda^2 \| z^t_m \|^2  | \mathcal{F}_{\tau_t} \right]  \leq \E [ \exp (\sum_{s \in \cup_{\tau = \tau_t}^t\mathcal{D}_{\tau}}  \lambda^2 \|z^s\|^2 )| \mathcal{F}_{\tau_t} ] 
%      \\ & \leq \E [ \sum_{s \in \cup_{\tau = \tau_t}^t\mathcal{D}_{\tau}} \exp ( \lambda^2 \|z^s\|^2 )| \mathcal{F}_{\tau_t} ]  \leq 2d\exp(\lambda^2 \sigma^2) . 
%  \end{align*}

 % Proof of $\|z^t_m\|$ subgaussian,
 \begin{proof}
 Let $Z = \|z^t_m\|$ and $X = \|z\|$, by the union bound,
    \begin{align*}
   & \mathbb{P} \{\|z\| \geq t\} \leq \E [ \exp\frac{\|z\|^2}{\sigma^2}] \exp( - \frac{t^2}{\sigma^2}) \leq  e \exp(-\frac{t^2}{\sigma^2})
    \\ & \mathbb{P} \{ Z \geq t\} \leq 2de \exp(-\frac{t^2}{\sigma^2})
.   \end{align*}
Using the fact that $\Gamma(x) \leq 3 x^{x} \forall x \geq 1 $ for $p \geq 1$:
\begin{align*} 
\mathbb{E}|Z|^{p} & =\int_{0}^{\infty} \mathbb{P}\left\{|Z|^{p} \geq u\right\} d u 
 =\int_{0}^{\infty} \mathbb{P}\{|Z| \geq t\} p t^{p-1} d t 
\\ & \leq \int_{0}^{\infty} 2de e^{-\frac{t^2}{\sigma^2}} p t^{p-1} d t 
= de \sigma^p p \Gamma(p / 2) \quad \text { set } \frac{t^2}{\sigma^2}=s
\\ & \leq 3 de \sigma^p p(p / 2)^{p / 2} .
\end{align*}
By Taylor expansion and Stirling formula, $p! \geq (p/e)^{p}$,
\begin{align*}
    \E [\exp(\lambda^2 Z^2) ] & = \mathbb{E}\left[1+\sum_{p=1}^{\infty} \frac{\left(\lambda^{2} Z^{2}\right)^{p}}{p !}\right]=1+\sum_{p=1}^{\infty} \frac{\lambda^{2 p} \mathbb{E}\left[Z^{2 p}\right]}{p !} 
    \\  \leq 1 + \sum_{p=1}^{\infty} &\cfrac{ 6dep (\lambda^2 \sigma^2 p)^p}{ (p / e)^p}
    \\  \leq 1 + \sum_{p=1}^{\infty} &\cfrac{ 6d(\lambda^2 \sigma^2 \cdot 3 e \cdot 2p )^p}{p^p}  \leq 1 + \sum_{p=1}^{\infty} ( 108 d \lambda^2 \sigma^2 )^p
   \\ & \leq \frac{1}{1 - 108d \lambda^2 \sigma^2} 
     \leq \exp( 216 d \lambda^2 \sigma^2) ,
\end{align*}
when $\lambda^2 \leq \frac{1}{108 d \sigma^2 }$, where we use the fact that the convergent geometric series $\frac{1}{1 - x} \leq \exp (2x)$.

 \end{proof}
 
% centering, let $Z = \|z^t_m\|$, take $\lambda = \frac{1}{\sigma}$, by Jensen inequality $\exp( \E \frac{Z^2}{\sigma^2} ) \leq \E [ \exp \frac{Z^2}{\sigma^2}] \leq 2d \exp(1)$, therefore,
% $$
% {\E}_{\mathcal{F}_{\tau_t}} Z^2 \leq (1 + \ln 2d) \sigma^2,
% $$
% also since $Z$ is non-negative, applying maximum inequality we have
% \begin{align*}
%     \exp {\E}_{\mathcal{F}_{\tau_t}}[\frac{Z}{\sigma}] & = %{\E}_{\mathcal{F}_{\tau_t}} [ \exp ( \max_{s \in \cup_{\tau = \tau_t}^t\mathcal{D}_{\tau}} \| z^s\|)] 
%     {\E}_{\mathcal{F}_{\tau_t}} [ \exp ( \frac{Z}{\sigma} ] \leq {\E}_{\mathcal{F}_{\tau_t}} [\exp ( \frac{Z}{ \sigma} - \frac{1}{2} + \frac{1}{2})]
%     \\ & \leq  {\E}_{\mathcal{F}_{\tau_t}}[\exp (\frac{Z^2}{2\sigma^2})]\exp(\frac{1}{2})
%    \\ & \leq 2d \exp(\frac{1}{2} + \frac{1}{2}) ,
% \end{align*}
% thus we also have ${\E} Z \leq \sigma\sqrt{\ln 2d + 1 }$. 
% Therefore,
%  \begin{align*}
%      & \E \left[ \exp ( \lambda^2 (Z- \E Z)^2  | \right] 
%    \\ &  \leq \E [ \exp( ) ] 
%    \\ &  = \E \frac{Z}{\sigma} + \E[ \exp( )] 
%  \end{align*}

\begin{lemma} \label{chainsum}
We have, for all $t = 1, \ldots, T$, conditioned on $\mathcal{F}_{\tau_t}$, 
 \begin{equation}
  \sum_{\tau \in \mathcal{D}_t } \eta\| \mu - \mu^{\tau } \| 
  \leq \sqrt{\frac{2d^2\eta^2 }{  \sigma_{\Psi}} {\DP}^{\mu^*}_{\mu^t}} +  \frac{  2d^2 \eta^2 }{\sigma_{\Psi}} (L + \| z^t_m\|)
 \end{equation}    
\end{lemma}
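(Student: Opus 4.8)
The plan is to compare every iterate $\mu^{\tau}$ in the arriving bundle against the single reference iterate $\mu^t$ and then triangulate to $\mu^*$ (reading the $\mu$ on the left-hand side as $\mu^*$, consistent with the ${\DP}^{\mu^*}_{\mu^t}$ appearing on the right). For each $\tau \in \mathcal{D}_t$ I would write $\|\mu^* - \mu^{\tau}\| \le \|\mu^* - \mu^t\| + \|\mu^t - \mu^{\tau}\|$ and sum, using that bounded attack capacity forces $|\mathcal{D}_t| \le d$, to obtain
\begin{equation*}
\sum_{\tau \in \mathcal{D}_t} \eta \|\mu^* - \mu^{\tau}\| \le |\mathcal{D}_t|\, \eta \|\mu^* - \mu^t\| + \eta \sum_{\tau \in \mathcal{D}_t} \|\mu^t - \mu^{\tau}\|.
\end{equation*}
The two summands are tailored to reproduce the two terms of the claim: the first yields the $\sqrt{{\DP}^{\mu^*}_{\mu^t}}$ contribution and the second the $(L + \|z^t_m\|)$ contribution.

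For the first summand I would invoke the $\sigma_{\Psi}$-strong convexity of $\Psi$, which gives ${\DP}^{\mu^*}_{\mu^t} \ge \frac{\sigma_{\Psi}}{2}\|\mu^* - \mu^t\|^2$, hence $\|\mu^* - \mu^t\| \le \sqrt{2\sigma_{\Psi}^{-1}{\DP}^{\mu^*}_{\mu^t}}$. Multiplying by $|\mathcal{D}_t|\eta \le d\eta$ and pulling the factor under the root gives exactly $\sqrt{2 d^2 \eta^2 \sigma_{\Psi}^{-1} {\DP}^{\mu^*}_{\mu^t}}$.

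For the second summand I would pass to the dual. Since $\Psi$ is $\sigma_{\Psi}$-strongly convex, $\Psi^*$ is $\sigma_{\Psi}^{-1}$-smooth, so $\|\mu^t - \mu^{\tau}\| \le \sigma_{\Psi}^{-1}\|\nu^t - \nu^{\tau}\|$; and the dual recursion $\nu^{s+1} = \nu^s - \eta\sum_{r \in \mathcal{D}_s}\ell^r$ telescopes to $\nu^t - \nu^{\tau} = -\eta \sum_{s=\tau}^{t-1}\sum_{r \in \mathcal{D}_s}\ell^r$ for $\tau \in \mathcal{D}_t$ (note $\tau \le t$, so this window is nonempty only genuinely delayed rounds). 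Bounding each latency via $\|\ell^r\| \le L + \|z^r\| \le L + \|z^t_m\|$, where the last step is precisely the definition of the running maximum $\|z^t_m\| = \max_{r \in \cup_{s=\tau_t}^{t}\mathcal{D}_s}\|z^r\|$, the whole summand collapses to a counting problem over the indices delivered in the window.

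The hard part will be the counting: I must show $\sum_{s=\tau}^{t-1}|\mathcal{D}_s|$ is $\mathcal{O}(d)$ rather than the naive $\mathcal{O}(d^2)$. Here I would reuse the bounded-capacity observation underlying \cref{telescope}: since $t - \tau_t + 1 \le d$ and every delay is at most $d$, a round $k$ can have its feedback land in $\{\tau_t,\ldots,t\}$ only if $k \in \{\tau_t - d + 1, \ldots, t\}$, so the disjoint union obeys $|\cup_{s=\tau_t}^{t}\mathcal{D}_s| \le 2d$; in particular $\sum_{s=\tau}^{t-1}|\mathcal{D}_s| \le 2d$ for every $\tau \in \mathcal{D}_t$. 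This gives $\|\mu^t - \mu^{\tau}\| \le 2 d \eta \sigma_{\Psi}^{-1}(L + \|z^t_m\|)$, and summing over the at most $d$ indices $\tau \in \mathcal{D}_t$ together with the outer $\eta$ produces $2 d^2 \eta^2 \sigma_{\Psi}^{-1}(L + \|z^t_m\|)$, matching the second term. Adding the two contributions finishes the bound. I would close by remarking that the derivation is insensitive to whether the reference point is $\mu^t$ or $\mu^{\tau_t}$ — the two differ only by dual increments from the same window, bounded identically — so the estimate transfers directly to the $\mu^{\tau_t}$-centered form consumed in the martingale step of \cref{inductthm}.
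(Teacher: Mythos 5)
Your proof is correct and takes essentially the same route as the paper's: the same triangle-inequality split $\|\mu^*-\mu^{\tau}\|\le\|\mu^*-\mu^{t}\|+\|\mu^{t}-\mu^{\tau}\|$, strong convexity of $\Psi$ to convert $\|\mu^*-\mu^{t}\|$ into $\sqrt{2\sigma_{\Psi}^{-1}{\DP}^{\mu^*}_{\mu^t}}$, and $\sigma_{\Psi}^{-1}$-smoothness of $\Psi^*$ plus the bound $|\cup_{s=\tau_t}^{t}\mathcal{D}_s|\le 2d$ to control the drift term. The only cosmetic differences are that you telescope the dual recursion before invoking smoothness (the paper bounds consecutive primal increments step by step) and that you spell out the $2d$ counting argument that the paper simply cites from the discussion following Lemma~\ref{telescope}.
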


\begin{proof}
By applying Cauchy-Schwarz inequality, $\sigma_{\Psi}$-strong convexity of Bregman divergence, and  triangular inequality for a certain number of times to the term $\sum_{\tau \in \mathcal{D}_t } \eta \| \mu^* - \mu^{\tau } \| \leq  \sum_{\tau \in \mathcal{D}_t} \eta (\|\mu^*- \mu^t\| + \sum_{s \in \mathcal{D}_t} \|\mu^{s+1} - \mu^{s}\| )$, bounded by
 \begin{align*}
 &   \sum_{\tau \in \mathcal{D}_t} \sqrt{\frac{2 \eta^2 }{  \sigma_{\Psi}}  {\DP}^{\mu^*}_{\mu^t}} +  \sum_{ \tau \in \mathcal{D}_t }\sum_{s \in \mathcal{D}_t} \frac{\eta}{\sigma_{\Psi}} \| \sum_{r \in \mathcal{D}_s} \eta \ell^r\|
\\ & \leq  \sqrt{\frac{2d^2\eta^2 }{  \sigma_{\Psi}} {\DP}^{\mu^*}_{\mu^t}} +  \frac{  2d^2 \eta^2 }{\sigma_{\Psi}} (L + \| z^t_m\|).
 \end{align*}
 with probability $1$ since it holds for any $z^{\tau}$, $\tau$ generated after $ \mathcal{D}_{\tau_t}$, conditioned on $\mathcal{F}_{\tau_t}$.

\end{proof}

\begin{lemma}\label{subgauss}[technical, adapted from \cite{ene2022high}]
For $a \geq 0 $, $ b^2 \leq \frac{1}{432d\sigma^2}$, $c \in \R$, and a non-negative random variable $Z$ satisfying inequality \eqref{maxiiid},
\begin{equation} \label{technical}
    \begin{aligned}
         & \quad \mathbb{E}\left[1+b^{2} Z^{2} + c^2\sigma^2 +\sum_{i=2}^{\infty} \frac{1}{i !}\left(a Z+b^{2} Z^{2} + c^2 \sigma^2 \right)^{i}\right] 
    \\ & \leq \exp \left(324d \left(a^{2}+b^{2} + c^2  \right) \sigma^{2}\right). 
    \end{aligned}
\end{equation}
where the expectation is taken conditioned on $\mathcal{F}_{\tau_t}$.
\end{lemma}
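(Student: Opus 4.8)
The plan is to first collapse the truncated Taylor series into closed form and then control it with the exponential moment bound \eqref{maxiiid}. I would begin by observing that the bracketed quantity is exactly $\exp(aZ + b^2 Z^2 + c^2\sigma^2) - aZ$: resumming $\sum_{i\ge 2}\frac{1}{i!}(aZ+b^2Z^2+c^2\sigma^2)^i$ reconstitutes the full exponential minus its zeroth- and first-order terms, and the missing first-order term is precisely the $aZ$ that is absent from the prefactor $1 + b^2Z^2 + c^2\sigma^2$. Taking the conditional expectation, the target becomes $\E[\exp(aZ + b^2Z^2 + c^2\sigma^2)] - a\,\E[Z]$, and since $c^2\sigma^2$ is deterministic the factor $e^{c^2\sigma^2}$ pulls out, with $e^{c^2\sigma^2}\le e^{324 d c^2\sigma^2}$ absorbing the $c$-contribution into the claimed exponent.

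Next I would reduce the random exponent $aZ + b^2Z^2$ to a purely quadratic form covered by \eqref{maxiiid}. Applying Young's inequality $aZ \le \tfrac{a^2}{4s} + sZ^2$ with $s$ chosen so that $s + b^2 \le \tfrac{1}{108 d \sigma^2}$ — feasible because the hypothesis $b^2 \le \tfrac{1}{432 d\sigma^2}$ leaves room to spare — the exponent is dominated by $\tfrac{a^2}{4s} + (s+b^2)Z^2$, whose exponential moment is controlled by \eqref{maxiiid} as $\exp\!\big(216 d (s+b^2)\sigma^2\big)$. Choosing $s$ of order $(d\sigma^2)^{-1}$ then yields a bound of the shape $\exp\!\big(O(d a^2\sigma^2) + O(d b^2\sigma^2)\big)$, and the bookkeeping of the constants $216$ and $108$ (together with the factor $d$ inherited from the union bound in Lemma \ref{maximumiid}) is what has to sum to the stated $324 d$. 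Throughout, the admissibility constraint $|\lambda|\le (\sqrt{108 d}\,\sigma)^{-1}$ of \eqref{maxiiid} must be rechecked for every effective coefficient of $Z^2$ that appears, which is exactly where $b^2 \le \tfrac{1}{432 d\sigma^2}$ is consumed.

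The delicate step, which I expect to be the main obstacle, is the linear-in-$a$ behaviour. A non-negative subgaussian $Z$ with positive mean satisfies $\E[\exp(aZ)] \gtrsim \exp(a\,\E[Z])$, so any naive bound carries a term linear in $a$ that cannot be absorbed into the quadratic budget $324 d a^2\sigma^2$ for small $a$; the omitted first-order term $-a\,\E[Z]$ is precisely what must cancel this growth. The structural reason the cancellation succeeds is that in the surviving series $a$ never occurs alone at first order: every factor $aZ$ is either raised to a power $i\ge 2$, or multiplied by $b^2Z^2$ or $c^2\sigma^2$, so the pure dependence on $a$ genuinely starts at $a^2$. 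I would make this rigorous either through an elementary scalar inequality of the form $e^x - x \le e^{C x^2}$ applied after writing the exponent as a single term, or, matching the series presentation, by expanding $(aZ + b^2Z^2 + c^2\sigma^2)^i$ multinomially, taking expectations with the polynomial moment estimates $\E[Z^k]$ extracted from \eqref{maxiiid} as in the proof of Lemma \ref{maximumiid}, and resumming. The one residual nuisance is the cross term $a Z\cdot c^2\sigma^2$, which contributes $\propto a c^2\sigma^2\,\E[Z]$ and is still first order in $a$; I would dispose of it by AM-GM, $a c^2 \le \tfrac12(a^2 + c^4)$, using the smallness of $c^2\sigma^2$ forced by the hypotheses so that $c^4$ folds back into the $c^2$ budget. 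Tracking these cross terms while keeping all $Z^2$ coefficients admissible is the crux of the argument.
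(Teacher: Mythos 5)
Your identification of the bracket as $\exp(aZ+b^2Z^2+c^2\sigma^2)-aZ$, and your diagnosis that the entire difficulty is the linear-in-$a$ growth of $\E[\exp(aZ)]$ that must be cancelled by the subtracted first-order term, are both correct and match the paper's starting point. However, neither of the two mechanisms you propose actually closes the argument. The Young-inequality reduction cannot produce a bound of the shape $\exp(O(d a^2\sigma^2)+O(d b^2\sigma^2))$: with a fixed $s$ of order $(d\sigma^2)^{-1}$, the contribution $216 d s \sigma^2$ coming from \eqref{maxiiid} is a \emph{constant} in the exponent, and optimizing over $s$ instead leaves a term linear in $a$; either way there is a leftover that the quadratic budget cannot absorb when $a$ is small. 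That step is sound only when $a \gtrsim (\sqrt{d}\sigma)^{-1}$, and this is exactly how the paper deploys it: its proof splits into the cases $a \geq \frac{1}{2\sqrt{108d}\sigma}$, where the inequality $xy \leq \frac{x^2}{432 d \sigma^2} + 108 d\sigma^2 y^2$ suffices and no cancellation is needed, and $a \leq \frac{1}{2\sqrt{108d}\sigma}$, where the cancellation is performed. This case split is the structural idea missing from your write-up.

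In the small-$a$ regime, your route (a) --- applying $e^x - x \leq e^{Cx^2}$ ``after writing the exponent as a single term'' --- fails for a concrete reason: the subtracted term is $aZ$, not the whole exponent, and more fatally, squaring the combined exponent produces $b^4 Z^4$ (and $ab^2Z^3$) inside the exponential, while the hypothesis \eqref{maxiiid} only controls $\E[\exp(\lambda^2 Z^2)\,|\,\mathcal{F}_{\tau_t}]$; for a subgaussian $Z$ the quantity $\E[\exp(\epsilon Z^4)]$ is in general infinite, so no constant $C$ rescues this. The paper's maneuver is finer: it applies $e^y \leq y + e^{y^2}$ to $y = aZ$ \emph{alone}, keeps $\exp(b^2Z^2+c^2\sigma^2)$ as a multiplicative factor, so the bracket is at most $aZ\bigl(\exp(b^2Z^2+c^2\sigma^2)-1\bigr) + \exp\bigl((a^2+b^2)Z^2+c^2\sigma^2\bigr)$, and then controls the cross term via $x(e^{x^2}-1)\leq e^{2x^2}-1$ with $x = \sqrt{m^2Z^2+c^2\sigma^2}$, $m = \max\{a,|b|\}$; every exponent stays quadratic in $Z$ and remains admissible for \eqref{maxiiid} precisely because of the case bound on $a$ and the hypothesis on $b$. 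Your route (b) (multinomial expansion and resummation) could perhaps be pushed through, but as sketched it disposes of the $aZ\cdot c^2\sigma^2$ cross term by invoking ``the smallness of $c^2\sigma^2$ forced by the hypotheses''; no such hypothesis exists --- the lemma allows arbitrary $c \in \R$ --- so folding $c^4$ back into the $c^2$ budget is not justified.
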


 \begin{proof}
  For $a \geq 0 $, $0 \leq b \leq \frac{1}{\sqrt{432d\sigma^2}}$, we consider two cases $a \geq \frac{1}{2\sqrt{108d\sigma^2}}$ and $a \leq \frac{1}{2\sqrt{108d\sigma^2}}$, and use $xy \leq \frac{x^2}{432d\sigma^2} + (108d\sigma^2)y^2$, separating $c^2 \sigma^2$ out we have, when $a \geq \frac{1}{2\sqrt{108d\sigma^2}}$,
 \begin{align*}
     & \mathbb{E}\left[1+b^2 Z^2 + c^2 \sigma^2+\sum_{i=2}^{\infty} \frac{1}{i !}\left(a Z+b^2 Z^2 + c^2 \sigma^2 \right)^i\right] \leq 1 + 
     \\ & \mathbb{E}[b^2 Z^2 + c^2 \sigma^2+\sum_{i=2}^{\infty} \frac{((\frac{1}{432d \sigma^2} + b^2) Z^2 + (a^2+c^2) \sigma^2)^i}{i !}]
     \\ & \leq \E \left[ \exp((\frac{1}{432d \sigma^2} + b^2) Z^2 + (108 d a^2+c^2)\sigma^2) \right]
     \\ & \leq \exp((\frac{1}{432d \sigma^2} + b^2) {\sigma}^2 + (108 d a^2 + c^2)\sigma^2) .
     \\ & \leq \exp(216d(a^2 + b^2) {\sigma}^2 + (108 d a^2 + c^2 )\sigma^2)
 \end{align*} 
 Now let $a \leq \frac{1}{2\sqrt{108d}\sigma}$ and $m= \max\{a, |b|\}$. 
 We have
 \begin{align*}
      & \quad\mathbb{E}\left[1+b^2 Z^2 + c^2 \sigma^2 +\sum_{i=2}^{\infty} \frac{1}{i !}\left(a Z+b^2 Z^2 + c^2 \sigma^2 \right)^i\right] 
      \\ & = \E \left[\exp (aZ + b^2 Z^2 + c^2 \sigma^2 ) - aZ \right] 
      \\ & \leq \E [(aZ + \exp (a^2Z^2)) \exp (b^2Z^2 + c^2 \sigma^2 ) - aZ]
      \\ & \leq  \E [\exp((a^2 + b^2)Z^2 + c^2\sigma^2) 
      \\ & \quad + \sqrt{m^2Z^2 + c^2\sigma^2} (\exp( m^2Z^2 + c^2\sigma^2) - 1) ] \leq 
      \\ &   \E [\exp((a^2 + b^2)Z^2 + c^2 \sigma^2) + \exp( 2m^2Z^2 + 2c^2\sigma^2 ) - 1]
      \\ & \leq \exp(216d (a^2 +b^2 +2m^2)\sigma^2  +2c^2\sigma^2)] 
      \\ & \leq \exp(216d(a^2 +b^2 +2m^2 + 2c^2 )\sigma^2),
 \end{align*}
 where we have leveraged two facts, $e^x \leq x + e^{x^2}$, and $x(e^{x^2} - 1) \leq e^{2x^2} - 1 \ \forall x$.
 Sum the two upper estimates together and we get \eqref{technical}.
 
 \end{proof}

\end{document}